\newcommand\distance{\mathfrak{d}}
\newcommand\distribution{\mathfrak{f}}
\newcommand\measureRho{\mathfrak{m}}
\newcommand{\pr}{\mbox{\sf P}}
\newcommand{\ex}{{\bf\sf E}}               %% expectation
\newcommand{\cR}{\mathcal{R}}
\newcommand{\calp}{{\cal P}}
\newcommand{\al}{\alpha}                %%
\newcommand{\ep}{\epsilon}
\newcommand{\ra}{\rightarrow}           %%
\newcommand{\sinc}{\,{\rm sinc}}
\def\tbfg{\textbf{Fig.}}
\newcommand{\Real}{\mathbb R}
\newcommand{\bbP}{\mathbb P}
\newcommand{\bbQ}{\mathbb Q}
\newcommand{\bbQP}{{\mathbb Q}\times{\mathbb P}}
\newcommand{\bbg}{\mathfrak g}
\newcommand{\bbG}{\mathfrak{G}} %\measureRho
\newcommand{\bbf}{\distribution}
\newcommand{\cP}{\mathcal{P}}
\newcommand{\cT}{\mathcal{T}}
\newcommand{\cL}{\mathcal{L}}
\newcommand{\commentOUT}[1]{}
\newcommand{\commentSG}[1]{\textcolor{green}{#1}}   
\newtheorem{thm}{Theorem}
\newtheorem{lem}{Lemma}
\newtheorem{pro}{Proposition}
\newtheorem{cor}{Corollary}
\newtheorem{defn}{Definition}
\journal{Physica D}
\begin{document}

\begin{frontmatter}

%% Title, authors and addresses

%% use the tnoteref command within \title for footnotes;
%% use the tnotetext command for theassociated footnote;
%% use the fnref command within \author or \affiliation for footnotes;
%% use the fntext command for theassociated footnote;
%% use the corref command within \author for corresponding author footnotes;
%% use the cortext command for theassociated footnote;
%% use the ead command for the email address,
%% and the form \ead[url] for the home page:
%% \title{Title\tnoteref{label1}}
%% \tnotetext[label1]{}
%% \author{Name\corref{cor1}\fnref{label2}}
%% \ead{email address}
%% \ead[url]{home page}
%% \fntext[label2]{}
%% \cortext[cor1]{}
%% \affiliation{organization={},
%%            addressline={}, 
%%            city={},
%%            postcode={}, 
%%            state={},
%%            country={}}
%% \fntext[label3]{}

\title{Hamiltonian Monte Carlo with Asymmetrical Momentum Distributions} %% Article title

%% use optional labels to link authors explicitly to addresses:
%% \author[label1,label2]{}
%% \affiliation[label1]{organization={},
%%             addressline={},
%%             city={},
%%             postcode={},
%%             state={},
%%             country={}}
%%
%% \affiliation[label2]{organization={},
%%             addressline={},
%%             city={},
%%             postcode={},
%%             state={},
%%             country={}}

\author{Soumyadip Ghosh, Yingdong Lu, Tomasz Nowicki} %% Author name

%% Author affiliation
\affiliation{organization={IBM T.J. Watson Research Center},%Department and Organization
            addressline={1101 Kitchawan Rd}, 
            city={Yorktown Heights},
            state={New York},
            postcode={10598}, 
            country={U.S.A}}

%% Abstract
\begin{abstract}
Existing rigorous convergence guarantees for the Hamiltonian Monte Carlo (HMC) algorithm use Gaussian  auxiliary momentum variables, which are crucially symmetrically distributed. 
We present a novel convergence analysis for HMC utilizing new dynamical and probabilistic arguments. The convergence is rigorously established under significantly weaker conditions, which among others allow for general auxiliary distributions. 
In our framework, we show that plain HMC with asymmetrical momentum distributions breaks a key self-adjointness requirement. We propose a modified version of HMC, that we call the Alternating Direction HMC (AD-HMC), which overcomes this difficulty. Sufficient conditions are established under which AD-HMC exhibits geometric convergence in Wasserstein distance. The geometric convergence analysis is extended to when the Hamiltonian motion is approximated by the leapfrog symplectic integrator, where an additional Metropolis-Hastings rejection step is required. Numerical experiments suggest that AD-HMC can generalize a popular dynamic auxiliary scheme to show improved performance over HMC with Gaussian auxiliaries.
\end{abstract}

%%Graphical abstract
%\begin{graphicalabstract}
%\includegraphics{grabs}
%\end{graphicalabstract}

%%Research highlights
%\begin{highlights}
%\item Research highlight 1
%\item Research highlight 2
%\end{highlights}

%% Keywords
\begin{keyword}
%% keywords here, in the form: keyword \sep keyword

%% PACS codes here, in the form: \PACS code \sep code

%% MSC codes here, in the form: \MSC code \sep code
%% or \MSC[2008] code \sep code (2000 is the default)
Hamiltonian Monte Carlo \sep
geometric convergence \sep asymmetrical momentum
\end{keyword}

\end{frontmatter}

%% Add \usepackage{lineno} before \begin{document} and uncomment 
%% following line to enable line numbers
%% \linenumbers

%% main text
%%

\section{Introduction}

\commentOUT{I suggest more emphasis on the W result, the the analytical one, then algorithmic one, delegating the algorithm inserts to the end of the Intro }

Hamiltonian Monte Carlo (HMC) belongs to the wider class of Markov Chain Monte Carlo (MCMC) algorithms~\cite{hastings70,gelfand90} that approximate the difficult-to-compute density of a target probability measure by running a Markov chain whose invariant measure coincides with the target distribution. Let  $\distribution(q)$ denote the target density of interest and support of $\distribution(q)$ is on a metric space $(\bbQ,\distance)$ %that typically, is a Euclidean space ${\mathbb R}^m$ for some positive integer $m$ % (as is  typical)
with $\distance$ being a metric in $\bbQ$. 

The density $\distribution(q)$ is of form $\distribution(q) = {\hat \distribution}(q)/C$, where ${\hat \distribution}(q)$ is easily queried but the normalizing constant $C= \int_\bbQ {\hat \distribution}(q) dq$ is hard to calculate. A core problem in modern statistics is in computing expectations with respect to such a density $\distribution(q)$, and forms a fundamental operation in many applications in machine/deep learning. For instance, in frequentist statistics, it relates to estimating coverage of a statistic using likelihood over data space. In 
Bayesian statistics, it can be generically stated as inferring the posterior distribution ${\mathfrak p}(q | x)$ of (unobservable) latent variables $q$ given observations $x$ from a user-modeled joint density ${\mathfrak p}(q, x)$. 
The inference target is ${\mathfrak p}(q | x) = {\mathfrak p}(q,x) / {\mathfrak p}(x)$, where the exact
marginal ${\mathfrak p}(x) = \int {\mathfrak p}(q,x) dq$ is usually hard to compute directly.  
Sample generation from density  $\distribution(q)$ is of independent interest in the emerging area of nonconvex optimization that seeks to efficiently find global optimal solutions by quickly exploring the decision space using HMC to discover all local optima \cite{chau22sghmc,gao21sghmc}; see the numerical experiments in Section~\ref{sec:expts}.

This motivates the wide use of MCMC in applications such as statistical inference~\citep{robert04}, inverse problems~\citep{stuart10}, artificial intelligence~\citep{andrieu03}, molecular dynamics~\citep{lilievre10} and global optimization. The early success of the MCMC approach can be attributed to simple, elegant and provably convergent algorithms such as the Metropolis-Hastings method~\citep{hastings70}, in which a user chosen auxiliary distribution and an acceptance/rejection mechanism helps pick and accept candidate samples as being from $\distribution(q)$. But MCMC algorithms suffer from slow convergence in high dimensional statistical computing in fields like molecular dynamics and artificial intelligence.  

Sample generation in the standard HMC (Algorithm~\ref{algo:hmc}) 
is driven by a dynamical system. Write the energy of the target distribution as $U(q) = -\log \distribution(q) = -\log {\hat\distribution(q)} + \log C$; we assume that an oracle can efficiently calculate the gradient of $U(q)$. In each iteration, the HMC Algorithm spreads (or lifts) the current iterate $q\rightarrow (q,p)$, where $p$ is an auxiliary set of {\em momentum} variables in space $\bbP$, which 
is also ${\mathbb R}^m$. The momentum $p$ is sampled from a probability measure with density function $\bbg(p) = e^{-V(p)}$, referred to as the \emph{momentum} or \emph{auxiliary} distribution with $V(p)$ its  {\em kinetic} energy. A (deterministic) transformation ${\cR}:(q,p)\mapsto (Q(T),P(T))$ is then applied using the following system of differential equations for a pre-selected time~$T>0$:
 \begin{align}
 \label{eqn:hamiltonian}
 {\dot Q}(t) =\frac{\partial H}{\partial p},\quad {\dot P}(t) =-\frac{\partial H}{\partial q},\quad\;&
 (Q(0), P(0))=(q,p)\,,
 \end{align}
where the total energy, or Hamiltonian, $H(q, p)=U(q)+V(p)$ of the system for time $t\in[0,T]$ is preserved by the motion generated by such dynamics. The $P$ variables are then dropped by applying the projection operator $\measureRho_\bbQ : (Q(T),P(T))\mapsto Q(T)$ to obtain next iterate. 
The terminology of $q$ and $p$ being the position and momentum variables, and $H(q, p)$ corresponding to the  energy arose in physics applications, where HMC first saw  wide-spread use \cite{DuaneEtAl87}. 
Since the parameter $T$ will remain fixed in this paper, we will omit it when there is no confusion.

The power of HMC stems from the energy-preserving property when obtaining the candidate solutions even for large $T$, since this also preserves the joint density of $(q,p)$ and hence (if the motion is exactly implemented) no new samples need  to be rejected. This leads to its relatively higher effectiveness compared to classical MCMC even in the high dimensional settings used in deep learning with neural networks~\cite{neal93, gelman14, jasche10, betancourt17}. With the rise of high-performance software implementations such as Stan~\cite{carpenter17, stan17}, the method has now become a pervasive tool across many scientific, medical, and industrial applications. Successful applications of HMC in scientific computing and discovery can be found in seismic research ~\cite{https://doi.org/10.1029/2019JB018428},~\cite{DELIMA2023128618}, and~\cite{10.1093/gji/ggae112}, as well as in physics~\cite{10.1093/gji/ggy496} and~\cite{PhysRevResearch.6.033142}. 

%\textbf{Limitations of Prior Results.}
A rich literature has emerged on the important theoretical problem of when and how fast the HMC algorithm converges, but with a crucial limitation that the auxiliary distribution $\bbg(p)$ be chosen as a (conditional) Gaussian. Of particular interest has been the conditions under which geometric convergence can be ensured for this case, which has been achieved through different approaches such as comparison theorems for differential equations~\citep{chenvempala2019}, 
Harris recurrence techniques~\citep{bou-rabee2017} and coupling~\citep{bou-rabee2020}.  Some of the various conditions identified for geometric ergodicity of HMC are not easy to verify (e.g.~\citep{bou-rabee2017}), some~\citep{Durmus2017OnTC, rapidmixing, livingstone2019} cannot lead to explicit expression of the convergence rate on the HMC parameters, while others~\citep{bou-rabee2020} heavily depends on delicate tricks for Gaussian $V(p)$ distributions, and require log-concave properties of the target density function. Meanwhile, in many applications in natural language processing~\cite{NLPMaria}, biology~\cite{https://doi.org/10.1002/gepi.1370070605} and physics~\cite{OSTMEYER2021107978}, probability distributions that are supported on discrete or asymmetric domains with possibly asymmetric target distribution can often be observed, it is thus desirable to extend HMC methodology and analysis to cope with these difficulties.

\textbf{Contributions.}
The main thrust of our paper is in the presentation of a new set of analytic techniques for HMC style algorithms. %In contrast to most previous work that treats the convergence of iterates $\{q_n\}$ of Algorithm~\ref{algo:hmc}, we analyse the convergence of the corresponding induced densities $\{h_n\}$ on a properly defined function space. 
In contrast to the most previous work, % that treats the convergence of iterates $\{q_n\}$ of Algorithm~\ref{algo:hmc}, 
we utilize new developments in geometric understanding of Markov chains, in conjunction with techniques in dynamical systems and probability to obtain qualitative results on both convergence and convergence rate. More specifically, by examining the coarse Ricci curvature defined though the measures related to the HMC Markov chain, coupled with detailed calculations for the Hamiltonian dynamical systems, we are able to establish geometric convergence in Wasserstein metric for a broader class of HMC algorithm settings than in the literature. In particular, we drop the key symmetry restriction on the auxiliary distribution $\bbg(p)$ (that is $\bbg(-p) = \bbg(p)$; see Sec.~\ref{sec:prelim}) thus generalizing the HMC algorithm to enable the use of a much broader class of any asymmetric auxiliary distribution.
Theorem~\ref{thm:two}, further relaxes the global condition of strong log-concavity on the target distribution used in previous results to only requiring that strong  log-concavity hold outside a ``small'' set (defined in~\eqref{defn:pseudo-small}). 
While we do not necessarily provide the tightest convergence rate estimate in comparison to existing results, our results focus on convergence in Wasserstein metric, which is not only a weaker notion hence covering more applications in practice but can also provide more robust bound for high dimensional problems as discovered in recent literature; see e.g.~\cite{HairerEtAl11} and~\cite{Durmus2015}.

%In Section~\ref{sec:original}, we establish that geometric convergence holds. 
%An analysis on convergence and geometric convergence in Hilbert space of the densities for HMC algorithms based on a compact operator approaches was conducted in~\cite{GHOSH2022107811}, and the convergence results are extended to general $L^q$ spaces for $q\in (1,\infty)$ in~\cite{HMC_Lp}. 

%that are standard 
%in  the literature for Gaussian $\bbg(p)$. 
%
%The analysis technique here switches to using probabilistic arguments over measure spaces endowed with the Wasserstein metric.  %A related approach based on spectra analysis can be found in ~\citet{HMC_L2}, and convergence in more general spaces in ~\citet{HMC_Lp}.
%This approach also allows us to consider more general Hamiltonian functions, instead of being restricted to the "classic" Hamilton form of $1/2||q||^2 + V(p)$ that limits the auxiliary distribution follows Gaussian. 
%

The use of general asymmetric $\bbg(p)$ necessitates a modification of the standard HMC Algorithm~\ref{algo:hmc}, which we present in Algorithm~\ref{algo:adhmc}. The main modification requires that the procedure alternates Hamiltonian motion in forward and backward directions for the same length $T$, and hence we call the algorithm the Alternating Direction HMC (AD-HMC). Our functional analysis approach allowed us in~\cite{GHOSH2022107811} (see Sec.~\ref{sec:notion of Conv} for a brief overview) to identify key properties of Hamiltonian motions that are sufficient to first establish convergence of the AD-HMC iterates. The self-adjointness of the HMC operator is a key characteristic that enables our proof of convergence, and the standard HMC operator with an asymmetric momentum distribution is not self-adjoint. The modified Alternating Direction HMC (AD-HMC) Algorithm~\ref{algo:adhmc} rectifies this by applying the HMC operator and its adjoint in alternating steps, and amounts to taking Hamiltonian motion in forward and backward directions. 
\commentOUT{\emph{take to Sec~\ref{sec:convL2}}: This significantly expands the class of HMC algorithms for which convergence is rigorously established, among others by dropping the symmetry restriction on momentum distributions. The proofs provide a simple and intuitive understanding of the working of HMC and illustrate \emph{why} its iterates converge. Moreover, our observations on the functional and probabilistic structures of the algorithms lead to a significantly shortened presentation compared to previous work in the literature. 
}

The Hamiltonian dynamics in HMC algorithms (including AD-HMC) are typically implemented by discrete numerical approximations such as the leapfrog integrator~\cite{leimkuhler04}. Since these approximations do not preserve the Hamiltonian $H$ exactly, an additional Metropolis-Hastings rejection step is imposed in AD-HMC as a correction. Lemma~\ref{lem:adhmc_mh_reversibility} shows that the MH corrected AD-HMC is time-reversible and hence converges. Theorem~\ref{thm:three} confirms that such practical discretization schemes along with the correction step also converge geometrically broadly under conditions similar to those imposed in Theorem~\ref{thm:two} for exact AD-HMC. 

A popular technique to speed up standard HMC methods is to dynamically update the covariance matrix of a Gaussian auxiliary $\bbg(p)$ with the goal~\citep{betancourt17} being to match the contours of $V(p)$ to $U(q)$.
Motivated by the form of the AD-HMC motion evolution derived in Proposition~\ref{prop:d QP d qp}, in Section~\ref{sec:algo} we generalize this approach to propose an  
adaptive scheme that dynamically constructs $\bbg(p)$ as a general mixture of Gaussians that aims to approximate the target $\distribution(q)$ up to all of its the modes. 
Section~\ref{sec:expts} describes our initial numerical experiments that suggest the potential benefit of considering asymmetrical momentum distributions. 
A three-dimensional simulated global optimization example is studied where the optimization objective has multiple local optima with widely varying regions of attraction. 
We demonstrate that the AD-HMC based heuristic significantly outperforms standard HMC with adaptively learned Gaussian distributions and 
motivates the use of general auxiliary distributions $\bbg(p)$ with the adaptive AD-HMC  heuristic, 
displaying the great potential of AD-HMC. 
%An exhaustive study of the practical, versatile uses of AD-HMC is beyond the scope of this paper. We can be certain that additional applications for asymmetric auxiliaries will become apparent as the literature delves into AD-HMC further, and we intend to further pursue deeper exhaustive investigations along these lines.

%%%%%%%%%%%%%%%%
\commentOUT{\emph{material that should go to Sec~\ref{sec:original}} :
The existing approaches crucially use the fact that under appropriate conditions on the Hamiltonian function, the Markov chains starting from any two fixed points in the state space would eventually result in iterated distributions that are (under some well-defined distributional metric) closer than the two origination points. Recent advances in the understanding of general Markov operators from the perspective of the optimal transport metric shed clearer light on this phenomenon, see e.g.~\cite{OLLIVIER2009810} and~\cite{joulin2010}. 
In this approach, the geometric convergence in Wasserstein metric of the Markov chain is quantified by the coarse Ricci curvature, a geometric concept that provides a second order characterization of movements in the probability measure space. }

\commentOUT{\emph{This should go to Sec~\ref{sec:convL2}}: Importantly, the curvature approach lets us produce quantitative characterization of the Markov chain through detailed calculations of the trajectories in the Hamiltonian system.
Theorem~\ref{thm:strong conv} %then 
shows that AD-HMC converges strongly to  $\distribution(q)$ in the functional space.  
}

\commentOUT{We should track and be careful around the notions of \emph{strict} and \emph{strong} log-concavity.}

The rest of the paper is organized as follows, basic information on HMC algorithms and technical preparations are prepared in Sec.~\ref{sec:prelim}; 
the core of our convergence analysis is presented in Sec.~\ref{sec:geometric_convergence}; then, AD-HMC is introduced and analyzed in Sec.~\ref{sec:perturbations}; 
the adaptive AD-HMC algorithm is presented in Sec.~\ref{sec:algo} and Sec.~\ref{sec:expts} provides results of experiments with the new algorithm; 
and 
the paper is concluded with a summary of our findings in Sec.~\ref{sec:conclusions}.

\section{Preliminaries}\label{sec:prelim}

%{\color{red} \textbf{Stole a bunch of stuff from here, need to edit to ensure it reads well.} HMC is a special case of Markov chain Monte Carlo (MCMC), whose purpose is to estimate a probability density given its local information, or given up to a constant. 
%The essence of MCMC is to simulate a Markov chain whose stationary distribution will have the density $\distribution(q)$, and its transition probabilities can be easily derived from known quantities. For example, the well known  Metropolis-Hasting methods utilize the ratio between the densities at two different points, which does not require the knowledge of the normalizing constant. In HMC, the avoidance of the normalizing constant in transition probability is through the derivative of the logarithm of the density function.  
%
%The solution represents the position and momentum in the Hamiltonian conservation system for time $t\in[0,T]$. 
%Furthermore, denote ${\cR}(q,p)$ as the map $(q,p)\mapsto (Q,P)$. 
%
The proof of the convergence of HMC algorithm rely on the following conditions abstracted from the Hamiltonian motion defined by~\eqref{eqn:hamiltonian}. 
\begin{itemize}
\item
The \emph{target distribution} is proportional to a function $0\le \bbf:\bbQ\to\Real$, integrable with respect to a \emph{reference measure} $dq$, $\int_\bbQ \bbf(q)\,dq<\infty$
\item
The \emph{auxiliary distribution} is a function $0\le \bbg:\bbP\to \Real$, $\int_\bbP \bbg(p)\,dp=1$, where $dp$ is a \emph{reference measure} on $\bbP$.
\end{itemize}
The measurable \emph{invertible} motion $\cR:\bbQP\to\bbQP$,
$\cR(q,p)=(Q,P)$, related to the target and the auxiliary distributions% is assumed to be invertible, %which corresponds to being able to go back in time and 
has the following \emph{invariance properties}: 
\begin{itemize}
    \item[A1.] $\bbf(Q)\cdot\bbg(P)=\bbf(q)\cdot \bbg(p)$ (conservation of the Hamiltonian energy), and
    \item[A2.] for any integrable function $A:\bbQ\times\bbP\to \Real$ we have $\iint_{\bbQ\times\bbP} A\circ {\cR}\,\, d(q,p)=\iint_{\bbQ\times\bbP} A\,\, d(q,p)$ (conservation of Lebesgue measure by the Hamiltonian motion).
\end{itemize} 
Additionally, we assume that the motion is irreducible (ergodic), which means there are no nontrivial measurable invariant sets. For simplicity we assume a \emph{coverage property}: 
\begin{itemize}
    \item[A3.] $ \measureRho_\bbQ \,(\, {\cR}(q,\bbP)\,)\,=\,\bbQ$ for (almost) every $q\in \bbQ$, with $\measureRho_\bbQ (A) := \{q\in \bbQ: \exists p\in \bbP, (q,p)\in A\}$ for any $A\in \bbQP$.
\end{itemize} 
This compactly represents (using a slight abuse of notation ${\cR}(\cdot,\cdot)$ and $\measureRho_\bbQ(\cdot)$) the property that every point $Q\in \bbQ$ can be reached from (almost) any $q\in\bbQ$ with a lift by an appropriate $p$ for the motion ${\cR}$ in one step. 

The HMC iterations are defined below.
We assume that the initial state is sampled from an arbitrary distribution that is absolutely continuous with respect to the \emph{reference measure} 
$dq$ on $\bbQ$, in case of the Hamiltonian motion it will be the Lebesgue measure on a Euclidean space.

\subsection{HMC as a Dynamical System}
\subsubsection{Function space} From an analytical point of view, the density function of a probability measure, well-defined when the measure is absolutely continuous with respect to the reference (Lebesgue) measure, can also be viewed as a member in a proper functional space. 
First, let the target measure on the space $\bbQ$ (which we can assume to be its support) be expressed as a density $\bbf$ with respect to the reference measure $dq$ on $\bbQ$. 
The abstract (or ideal) HMC uses an auxiliary measure with a density  $\bbg$ on the space $\bbP$ with respect to the reference measure $dp$ there. A step of HMC is a realization of an operator $\cT$ acting  
on the densities $h:\bbQ\to\Real$ belonging to 
$$L^2_\bbf:=\left\{h:||h||_\bbf^2=\int h^2/\bbf<\infty\right\},$$
where the integration is with respect to the reference measure $dq$. 
To simplify notation we shall skip the subscript $\bbf$ and write $L^2$ and $||h||^2$.

\subsubsection{Operator} The operator $\cT$ 
first constructs the joint distribution $h(q)\bbg(p)$ on $\bbQ\times\bbP$ (in the lifting step), then performs the Hamiltonian motion in the product space $(q,p)\mapsto (Q,P)={\cR}(q,p)$ producing a joint distribution  $h(Q)\cdot \bbg(P)=(h\cdot\bbg)\circ {\cR}(q,p)$ and finally projects this transported density
along the direction of $\bbP$ on its marginal on space $\bbQ$. 
%This can be expressed as an operator $\cT$ acting on the densities $h:\bbQ\to\Real$
%belonging to $L^2$,
%$L^2_\bbf=\{h:||h||_\bbf^2=\int h^2/\bbf<\infty\}$, where the integration is with respect to the reference measure $dq$. To lighten notation we shall skip the subscript $\bbf$ and write $L^2$ and $||h||^2$.
More specifically,
\begin{equation}
  \label{eqndef:cT}
  \cT h(q)=\int_\bbP (h\cdot\bbg)\circ {\cR}(q,p)\,dp=\int_\bbP h(Q)\bbg(P)\,dp\,.
\end{equation}

The progression of HMC algorithm applied to an initial density $h$ can be expressed as the sequence of iterations of the operator $\cT^n h $, where $\cT^{n+1}=\cT\circ\cT^n$.

\commentOUT{
The evolution of iterates $q$ is naturally modeled as a Markov chain. Next step depends only on the previous one and an independent and randomly generated momentum vector $p$. Hence, we can equivalently study the evolution of the density functions of the Markov chain. Our HMC analysis combines a dynamical system on a functional space with methods in probability theory.}

\subsection{HMC as a Markov chain}

Modeling the evolution of Algorithm~\ref{algo:hmc} as a Markov chain defined on the space $\bbQ$, its transition probability can be defined as follows. Given the fixed parameter $T$ and initial position $q\in \bbQ$, 
the map 
\[
\Pi_{q}:=\measureRho_\bbQ \circ {\cR}(q,\cdot): \bbP \rightarrow \bbQ\,,
\]
with $\measureRho_\bbQ$ being the usual projection from $\bbQ\times \bbP$ to $\bbQ$, is an onto map. Hence, for any given probability measure $\bbG$ on $\bbP$ (with density $\bbg(p) = e^{-V(p)}$),
% for example a Gaussian measure, 
the map $\Pi_{q}$ induces a push forward $(\Pi_{q })_\# (\bbG)$, a probability measure on $\bbQ$ defined as $(\Pi_{q })_\#  (\bbG)(A) = \bbG(\Pi_{q }^{-1}(A))$ for any measurable set $A$, with $\Pi_{q }^{-1}(A)$ denoting its pre-image under $\Pi_{q }$. 
%In the context of HMC, the probability distribution is referred to as the {\it auxiliary distribution}. 
This push forward provides the transition probability for our Markov chain, $\pr(q, A) := (\Pi_{q})_\#  (\bbG)(A)$.

\subsection{Functional Analysis Approach to Convergence}\label{sec:notion of Conv}

With the Markov chain and dynamical system defined above, various notions of convergence need to be discussed for understanding the main results of the paper. 
\commentOUT{For a Markov chain, it is well known that, under proper assumptions, there exists a probability measure $\mu^{\infty}$ that is invariant under the push forward $(\Pi_{q})_\#$, hence named invariant measure. Furthermore, in typical probabilistic literature,
%\begin{align*}
$\lim_{n\rightarrow \infty} d_{TV} (\mu_n, \mu^{\infty}) =0$,
%\end{align*}
where $\mu_n= ((\Pi_{q})_\#)^n \mu_0$ for some initial probability  $\mu_0$, and $d_{TV}$ denotes the {\it total variational distance} between two measures, $d_{TV}(\mu.\nu)=\sup_{A} |\mu(A)-\nu(A)|$ with the supremum taken over all the sets in the $\sigma$-field where the probability measures are defined. In terms of the HMC notation, the measures $\mu_n$ and $\mu^{\infty}$ can be viewed to have densities $h_n$ and $\bbf$.

The total variational distance has been extensively studied in statistics and probability, and has close connections to other distance metrics of probability measures such as the Kuhlback-Leibler divergence.  See Definition~\ref{defn: wasserstein_defn} of the family of metrics ~\citep{joulin2010} of closeness of probability measures. It is however beneficial and sometimes more convenient to consider convergence under the Wasserstein distance \commentSG{\emph{(accurate?)} since the topology induced by this metric is weaker and hence its open sets admit larger classes of probability measures}.% as we discussed in the introduction.
}
%It is shown, for example in~\cite{bou-rabee2017}, that the Markov chain defined as above has a unique stationary distribution $\pi$, and the Markov chain converges to $\pi$ weakly. Our goal is to estimate the convergence rate. The intuitive idea that will be explored in this paper is that, at each step of the Markov chain, if we start the chain in independently on two different initial distributions, the ending distributions will be closer to each other than the starting distributions. Viewing the Markov chain as a Markov operator defined on the space of probability measures, this implies that the operator is a contraction for a properly defined metrics. Hence, the convergence can be shown to be geometric, and we will have an estimation of its rate. 

From the analytical viewpoint, the convergence of the HMC is studied in terms of convergence of density functions in a proper functional space. In particular, the  $L^2$ space defined previously with respect to the reciprocal  of the density function $\bbf$ has been found to be very useful in establishing the convergence for a variety of iterative operators, see e.g. \cite{MV2000}. 
%
%we find the following Hilbert space defined with respect to the reciprocal  of the density function $\bbf$ to be very useful in establishing the convergence of (AD-)HMC: 
%$$L^2_\bbf=\left\{h:||h||_\bbf^2=\int h^2/\bbf<\infty\right\},$$  
The space $L^2$ has a natural scalar product $\langle a,b \rangle=\int ab/\bbf$. Hence, the (strong) convergence in $L^2$ for a sequence of functions $h_n$ to a function $h$ is defined by
%\begin{align*}
$\lim_{n\rightarrow \infty} || h_n-h|| =0$, and the 
%\end{align*}
weak convergence is defined as 
%\begin{align*}
$\lim_{n\rightarrow \infty} \langle h_n-h, a\rangle =0$,
%\end{align*}
for any $a\in L^2$. 
In~\cite{CHEN2000281}, it is shown that $L^2$ geometric convergence is equivalent to geometric ergodicity, which is a well studied property in the Markov chain literature. 
In the space $L^2$, the \emph{adjoint} operator $\cT^\dag$ to $\cT$  is characterized by $\langle \cT a, b\rangle=\langle a,\cT^\dag b\rangle$ for any $a,b \in L^2$. A~\emph{self-adjoint} operator satisfies $\cT^\dag=\cT$.
By the invariance properties (A1) and (A2) described above
%after the definition of the Hamiltonian~\eqref{eqn:hamiltonian}
$\cT^\dag h=\int_\bbP (h\cdot\bbg)\circ {\cR}^{-1}$, and a sufficient condition for self-adjointness is that the auxiliary distribution is symmetric $\bbg(p)=\bbg(-p)$.  If $\cT$ is not self-adjoint (then $\bbg(p)$ must be asymmetrical) define ${\cT_A}=\cT^\dag\circ\cT$ (since $\langle \cT_A a, b\rangle=\langle \cT^\dag\circ\cT a, b\rangle= \langle \cT a, \cT b\rangle =\langle  a, \cT^\dag\circ\cT b\rangle= \langle  a, \cT_A b\rangle$).  Self-adjointness is crucial to the convergence result in the following Theorem~\ref{thm:strong conv}, whose detailed proof can be found in~\cite{GHOSH2022107811}.

%\subsection{The AD-HMC Algorithm}

%So, the original motion equations are , and with one has negative-time equations as . We will add a clarification. Note that the motion produced by this set of equations, and in particular the final point realized by the motion starting from
%, will be different from the positive-time motion starting from the point
%where the momentum variable's sign has been flipped. With asymmetric momentum distributions, the momentum
%may even have very low or zero probability, which will change the distributional characteristics of the resulting iterates.

%{\color{red} Tomasz, we need your setup of the HMC operator, proof of convergence and why the self-adjointness is important.
%}
%{\color{brown}

%\subsection{Convergence in $L^2$} % of the Algorithm}
%\label{sec:conv}

%; the new operator $\cT_A$ is self-adjoint by its definition.}

\begin{thm}\label{thm:strong conv} 
  %Let the motion ${\cR}$ that defines the operator $\cT$ have invariance and coverage properties.
  For any $h\in L^2$ the sequence of alternating iterations ${\cT_A}^n h$  converges strongly to the fix point $\alpha\bbf$, where $\alpha$ is a constant, i.e. $\lim_{n\ra \infty} \|{\cT_A}^n h-\alpha\bbf\|^2=0$. %=\int h/\int\bbf$. 
  If additionally $\cT$ is self-adjoint itself then $\cT^n h$ converges strongly to $\alpha\bbf$.
\end{thm}

%For the proof of Theorem~\ref{thm:strong conv}, detailed in Sec. \ref{sec:thm_sc}, 
%We refer to the numbering of~\cite{HMC_L2}.
We observe that $\cT h =\bbf\int_\bbP (h/\bbf)\circ\cR \cdot \bbg$ is in fact an averaging map
%(Lemma~\ref{lem:averaging}), 
(\emph{Lemma~3.2 of \cite{GHOSH2022107811}})
thus by the convexity of $x\mapsto x^2$
the norm decreases under $\cT$: $\|\cT h\|<\|h\|$, 
%(Lemma~\ref{lem:cT prop norm} )
(\emph{ibidem}), 
by the coverage assumption sharply, unless $h=\alpha\bbf$. In the space $L^2$, bounded sequences have weak accumulation points. Using self-adjointness it is then proven %(Proposition~\ref{prop:weak acc} of~\cite{HMC_L2}) 
(\emph{Corollary~5.3 of \cite{GHOSH2022107811}})
that each accumulation point of the sequence of iterations has the same norm and thus %(Corollary~\ref{cor:weak conv}) 
(\emph{ibidem}), 
must be of form $\alpha\bbf$, where the value $\alpha$ is deduced from integral invariance
$\int_\bbQ h =\int_\bbQ \cT h$
%(Lemma~\ref{lem:integral invariance}). 
(\emph{Lemma~3.2 of \cite{GHOSH2022107811}} again).
Hence the whole sequence converges. Meanwhile the convergence of the norms to the norm of the limit provides 
%(Proposition~\ref{prop:strong conv}) 
the proof of strong convergence (\emph{Proposition~5.4 of \cite{GHOSH2022107811}}).

\commentOUT{
We assume the operator $\cT$ is self-adjoint, and has invariance and coverage properties. All the arguments applies also to the operator ${\cT_A}$ 
 which is self-adjoint as  $\cT_A^\dag=(\cT\circ \cT^\dag)^\dag=(\cT^\dag)^\dag\circ \cT^\dag=\cT\circ \cT^\dag=\cT_A$ and inherits the invariance properties as well.
}

%%%%%%%%%%%%%% All following Lemmata commented OUT %%%%%%%%%%%%%%%

\commentOUT{
\begin{lem} \label{lem:averaging}
 $\cT h=\bbf\int_\bbP (h/\bbf)\circ {\cR}\cdot \bbg$ is an averaging operator, and $\cT\bbf=\bbf$ is a fixed point. 
  \end{lem}
\begin{proof}
By the invariance property,  $\cT h=\int_\bbP (h/\bbf)\circ {\cR}\cdot(\bbf\cdot\bbg)\circ {\cR}=
\int_\bbP (h/\bbf)\circ {\cR}\cdot\bbf\cdot\bbg=\bbf\int_\bbP (h/\bbf)\circ {\cR}\cdot \bbg$. That shows that any $h$ proportional to $\bbf$ is a fixed point of $\cT$. 
\end{proof}
\begin{lem}
  \label{lem:cT prop norm}
  The operator $\cT$ is well defined on the non-empty $L^2_\bbf\ni\bbf$.
  It strictly decreases the norm $||\cT h|| < ||h||$ unless $h={\rm const}\cdot \bbf$ in which case it is a fixed point of the operator.
\end{lem}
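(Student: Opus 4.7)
The plan is to verify well-definedness first, then obtain the norm estimate via a single application of Jensen's inequality combined with the two invariance properties of ${\mathcal{R}}$, and finally extract the rigidity statement from the coverage property.

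First, the space $L^2_\bbf$ is non-empty because $\bbf$ itself belongs to it: $\|\bbf\|^2 = \int \bbf^2/\bbf = \int \bbf = 1$. Next, write $\phi := h/\bbf$ and use Lemma~\ref{lem:averaging} in the form $\cT h(q)/\bbf(q) = \int_\bbP \phi\circ\Pi_q(p)\,\bbg(p)\,dp$, so $\cT h / \bbf$ is the expectation of $\phi\circ\Pi_q$ under the probability density $\bbg$ on $\bbP$. Thus
\begin{equation*}
  \|\cT h\|^2 = \int \bbf(q)\Bigl(\int_\bbP \phi(Q(q,p))\,\bbg(p)\,dp\Bigr)^2 dq.
\end{equation*}

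The main step is to apply Jensen's inequality with the convex function $x\mapsto x^2$ to the inner integral, obtaining
\begin{equation*}
  \|\cT h\|^2 \le \iint_{\bbQ\times\bbP} \bbf(q)\bbg(p)\,\phi^2(Q(q,p))\,dq\,dp.
\end{equation*}
Invariance property (1), $\bbf(q)\bbg(p)=\bbf(Q)\bbg(P)$, turns the integrand into $\bbf(Q)\bbg(P)\phi^2(Q)$; invariance property (2), conservation of Lebesgue measure under ${\mathcal{R}}$, lets me substitute $(q,p)\mapsto(Q,P)$ and separate variables. Because $\int_\bbP \bbg(P)\,dP=1$, the right-hand side collapses to $\int \bbf\,\phi^2 = \int h^2/\bbf = \|h\|^2$. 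This yields $\|\cT h\| \le \|h\|$, which in particular gives the well-definedness $\cT:L^2_\bbf\to L^2_\bbf$.

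For the rigidity claim, equality in Jensen forces the integrand $\phi\circ\Pi_q(\cdot)$ to be constant in $p$ for $\bbf$-a.e.\ $q$; call that constant $c(q)$. The coverage property (3) says that for a.e.\ $q$ the map $\Pi_q$ sweeps out (almost) all of $\bbQ$, so $\phi(Q)=c(q)$ for a.e.\ $Q\in\bbQ$, making $c(q)$ a constant $c$ independent of $q$. Hence $h=c\bbf$, and by Lemma~\ref{lem:averaging} any such $h$ is a fixed point. The only mildly subtle step is this last one—pushing from ``constant in $p$ with a possibly $q$-dependent value'' to ``globally constant on $\bbQ$''—and it is here that the coverage assumption is indispensable; the rest is standard.
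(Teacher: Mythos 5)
Your proof is correct and follows essentially the same route as the paper's: express $\cT h/\bbf$ as an average against $\bbg$ via Lemma~\ref{lem:averaging}, apply Jensen's inequality to the convex function $x\mapsto x^2$, use invariance property (1) to replace $\bbf(q)\bbg(p)$ by $(\bbf\cdot\bbg)\circ{\mathcal{R}}$, change variables via invariance property (2), and invoke the coverage property to upgrade ``constant in $p$ for each $q$'' to a global constant. The only cosmetic difference is that you compute $\|\bbf\|^2 = 1$ precisely where the paper merely notes finiteness.
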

\begin{proof}
  We have $||\bbf||^2=\int_\bbQ(\bbf)^2/\bbf=\int_\bbQ\bbf<\infty$.
\begin{align*}
 ||\cT h||^2&=\int_\bbQ \frac{\left(\bbf\int_\bbP (h/\bbf)\circ {\cR}\cdot \bbg\right)^2}{\bbf}
 %\\
%&
\le\iint\limits_{\bbQ\times\bbP} \left(\frac{h}{\bbf}\circ {\cR}\right)^2\cdot (\bbg\cdot\bbf)
=
\iint\limits_{\bbQ\times\bbP} \left(\frac{h}{\bbf}\circ {\cR}\right)^2\cdot (\bbg\cdot\bbf)\circ {\cR}
\\
&=
\iint\limits_{\bbQ\times\bbP} \left(\frac{h}{\bbf}\right)^2\cdot (\bbg\cdot\bbf)
=
\left(\int_\bbQ \frac{h^2}{\bbf}\right)\cdot\left(\int_\bbP\bbg\right)=||h||^2\cdot 1
\end{align*}
The equality happens only if for $\bbf$-a.e.~$q$ we have $(\int_\bbP (h/\bbf )\circ {\cR}\cdot \bbg)^2=
\int_\bbP (h/\bbf )^2\circ {\cR}\cdot \bbg$, but for that $(h/\bbf)\circ {\cR}=(h/\bbf)(Q(q,p))$ must be constant with respect to $\bbg$-a.e. $p$. The coverage assumption forces the constant to be the same for (almost surely) all $q$.
It follows that either $h=\alpha\bbf=\cT h$ with equal norms or otherwise the norm is strictly decreasing, i.e. $||\cT h||^2<||h||^2$.
\end{proof}
\begin{lem}
  \label{lem:integral invariance}
  The operator $\cT$ conserves the integral $\int_\bbQ \cT h=\int_\bbQ h=\langle \bbf,h\rangle$.
  \end{lem}
\begin{proof}
  By the invariance property,  $\int_\bbQ \cT h=\iint_{\bbQ\times\bbP}(h\cdot \bbg)\circ{\cR}=
  \iint_{\bbQ\times\bbP}(h\cdot \bbg)=(\int_\bbQ h)\cdot(\int_\bbP \bbg)=\int_\bbQ h\cdot 1$.
  The last statement follows from the definition of the scalar product in $L^2$ space.
\end{proof}
Lemma \ref{lem:cT prop norm} allows us to define $S(h)=\lim_{n\to\infty}||\cT^nh||^2=\inf_n ||\cT^nh||^2$. Clearly
for any $M$ we have $S(h)=S(\cT^M h)$.
Next, a crucial proposition is established utilizing self-adjointness.
\begin{pro}\label{prop:weak acc}
Any weak accumulation point $h_\infty$ of the sequence $\cT^nh$ satisfies  $||h_\infty^2||={\mathcal{S}}(h)$.
\end{pro}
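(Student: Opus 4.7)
\textbf{Proof proposal for Proposition~\ref{prop:weak acc}.}
The plan is to bracket $\|h_\infty\|^2$ between two matching estimates, one from weak lower semicontinuity and one from self-adjointness. The upper bound is essentially free: since the full sequence $\|\cT^n h\|^2$ is monotone non-increasing with limit $S(h)$ by Lemma~\ref{lem:cT prop norm}, any subsequence converges to $S(h)$ as well, so weak lower semicontinuity of the $L^2$-norm applied to $\cT^{n_k} h\to h_\infty$ gives
\begin{equation*}
\|h_\infty\|^2 \le \liminf_{k\to\infty} \|\cT^{n_k} h\|^2 = S(h).
\end{equation*}

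The substantive step is the matching lower bound $\|h_\infty\|^2 \ge S(h)$, and this is precisely where self-adjointness of $\cT$ is used. First I would pass to a sub-subsequence on which all indices $n_k$ have the same parity (and relabel), so that $n_k+n_j$ is even for every pair. For any fixed $k$, weak convergence $\cT^{n_j} h\to h_\infty$ tested against the fixed element $\cT^{n_k} h\in L^2$ yields
\begin{equation*}
\langle \cT^{n_k} h,\, h_\infty\rangle = \lim_{j\to\infty} \langle \cT^{n_k} h,\, \cT^{n_j} h\rangle.
\end{equation*}
Now I would fold the cross term into a single iterate norm using self-adjointness: $\langle \cT^{n_k} h, \cT^{n_j} h\rangle = \langle h, \cT^{n_k+n_j} h\rangle = \|\cT^{(n_k+n_j)/2} h\|^2$, and the right-hand side tends to $S(h)$ since $(n_k+n_j)/2\to\infty$. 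Hence $\langle \cT^{n_k} h, h_\infty\rangle = S(h)$ for every $k$ in the sub-subsequence.

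Finally, letting $k\to\infty$ and invoking weak convergence one more time, now with $h_\infty$ itself as the test function, gives $\lim_k \langle \cT^{n_k} h, h_\infty\rangle = \langle h_\infty, h_\infty\rangle = \|h_\infty\|^2$. Combining this with the previous display yields $\|h_\infty\|^2 = S(h)$, completing the proof.

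The main obstacle, and the heart of the argument, is the folding identity $\langle \cT^{n_k} h,\cT^{n_j}h\rangle = \langle h, \cT^{n_k+n_j} h\rangle$: without self-adjointness there is no way to collapse a cross-iterate inner product into a norm whose asymptotics are controlled, and the proof would stall. This is exactly why the proposition is stated for self-adjoint $\cT$ (and why the paper introduces $\cT_a = \cT^\dag\circ\cT$, which is automatically self-adjoint, when the original HMC operator is not). The parity restriction is a mild technical wrinkle that the sub-subsequence extraction disposes of cleanly.
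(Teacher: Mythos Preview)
Your proof is correct and rests on the same core mechanism as the paper's: the self-adjointness identity that collapses $\langle \cT^a h,\cT^b h\rangle$ into $\|\cT^{(a+b)/2}h\|^2$, whose limit is $S(h)$. The upper bound via weak lower semicontinuity is identical in spirit to the paper's Cauchy--Schwarz argument $\langle \cT^{n_k}h,h_\infty\rangle\le\|\cT^{n_k}h\|\cdot\|h_\infty\|$.

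Where you differ is in the lower bound. The paper first replaces $h$ by $\cT^M h$ so that $\|h\|^2\le S(h)+\epsilon$, passes to an even sub-subsequence $n_k=2m$, writes $S(h)\le\|\cT^m h\|^2=\langle \cT^{2m}h,h\rangle\to\langle h_\infty,h\rangle\le\|h_\infty\|\,\|h\|\le\|h_\infty\|(S(h)+\epsilon)^{1/2}$, and then lets $\epsilon\to 0$. You instead run a double limit: fix $k$, send $j\to\infty$ to get $\langle \cT^{n_k}h,h_\infty\rangle=S(h)$, then send $k\to\infty$ to get $\|h_\infty\|^2=S(h)$ directly. Your route avoids both the $\epsilon$-shift of the base point and the Cauchy--Schwarz step, at the price of invoking weak convergence twice; it is arguably cleaner and in fact yields the equality outright, rendering the separate upper bound redundant. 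Both arguments need the same parity restriction, handled the same way (pass to a sub-subsequence, which still has weak limit $h_\infty$).
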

\begin{proof}
By the definition of weak convergence $\cT^{n_k}h\rightharpoonup h_\infty$ for a subsequence $\cT^{n_k}$, we have
$\langle \cT^{n_k} h, a\rangle\to\langle h_\infty, a\rangle$ for any $a\in L^2$.
In particular $\int h=\int\cT^{n_k} h= \langle \cT^{n_k}h, \bbf\rangle\to\langle h_\infty, \bbf\rangle=\int h_\infty$, where we use Lemma~\ref{lem:integral invariance}. 
We can assume that given an $\epsilon>0$ we have $S(h)\le||h||^2\le S(h)+\epsilon$ and that the subsequence contains infinitely many even iterates  $\cT^{2m}$, otherwise take some $M>0$ and use $\cT^M h$ in place of $h$. We now use self-adjointness:
 $S(h)\le ||\cT^m h||^2=\langle \cT^m h,\cT^m h\rangle=\langle \cT^{2m} h, h\rangle
\to \langle h_\infty, h\rangle\le ||h_\infty||\cdot||h||\le ||h_\infty||\cdot(S(h)+\epsilon)^{1/2}$.
That proves $||h_\infty||\ge S(h)^{1/2}$. The opposite inequality is standard $||h_\infty||^2\leftarrow\langle \cT^{n_k}h,h_\infty\rangle \le||\cT^{n_k}h||\cdot||h_\infty||$.
\end{proof}
\begin{cor}
  \label{cor:weak conv}
  The sequence $\cT^n h$ converges weakly to $\alpha\bbf$, with $\alpha=\int h/\int\bbf$.
\end{cor}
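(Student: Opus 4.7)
The plan is to combine Proposition~\ref{prop:weak acc} with the equality case of Lemma~\ref{lem:cT prop norm} to pin down the form of any weak accumulation point of $\{\cT^n h\}$, identify the scalar via Lemma~\ref{lem:integral invariance}, and then upgrade to convergence of the whole sequence by uniqueness of the limit.

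First I would observe that the sequence $\{\cT^n h\}$ is bounded in $L^2$ because Lemma~\ref{lem:cT prop norm} makes $||\cT^n h||$ non-increasing, so by weak compactness of bounded sets in a Hilbert space, weak accumulation points exist. Let $h_\infty$ be any one of them, with $\cT^{n_k}h\rightharpoonup h_\infty$ along some subsequence. Because $\cT$ is a bounded linear operator on $L^2$ (its operator norm is at most $1$ by Lemma~\ref{lem:cT prop norm}), it admits the adjoint $\cT^\dag$ already introduced in the paper and is therefore weak-to-weak continuous; hence $\cT^{n_k+1}h=\cT(\cT^{n_k}h)\rightharpoonup \cT h_\infty$, exhibiting $\cT h_\infty$ as another weak accumulation point. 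Proposition~\ref{prop:weak acc} applied to both $h_\infty$ and $\cT h_\infty$ then forces $||\cT h_\infty||=||h_\infty||=\sqrt{S(h)}$, and the equality case of Lemma~\ref{lem:cT prop norm} yields $h_\infty=c\bbf$ for some constant $c\in\Real$.

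To determine $c$, I would use $\bbf$ itself as the test function (it lies in $L^2$ by the computation $||\bbf||^2=\int\bbf<\infty$ from the proof of Lemma~\ref{lem:cT prop norm}) together with the scalar product $\langle a,b\rangle=\int ab/\bbf$. For every $k$, Lemma~\ref{lem:integral invariance} gives $\langle \cT^{n_k}h,\bbf\rangle=\int \cT^{n_k}h=\int h$, while weak convergence gives $\langle \cT^{n_k}h,\bbf\rangle\to\langle c\bbf,\bbf\rangle=c\int\bbf$, so $c=\int h/\int\bbf=\alpha$. Since every weak accumulation point of this bounded sequence equals the same $\alpha\bbf$, the standard Hilbert-space argument (any subsequence has a weakly convergent subsubsequence, and they all share the single possible limit) forces the entire sequence to converge weakly, giving $\cT^n h\rightharpoonup \alpha\bbf$.

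The main obstacle, and really the only nontrivial input beyond bookkeeping, is justifying that $\cT h_\infty$ is itself a weak accumulation point so that Proposition~\ref{prop:weak acc} may be applied to it. This reduces to the weak-to-weak continuity of $\cT$, which in turn follows from its boundedness as a linear operator; the needed bound is already implicit in Lemma~\ref{lem:cT prop norm}, so this is a short verification rather than a new estimate. Everything else is routine Hilbert-space machinery applied to the structural facts (norm decrease, integral invariance, equality-case rigidity) that the preceding lemmas have already established.
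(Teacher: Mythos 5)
Your argument is correct and follows the same route the paper takes: treat $\cT h_\infty$ as another weak accumulation point, apply Proposition~\ref{prop:weak acc} to conclude $||\cT h_\infty||=||h_\infty||$, invoke the equality case of Lemma~\ref{lem:cT prop norm} to get $h_\infty=c\bbf$, pin down $c=\alpha$ via Lemma~\ref{lem:integral invariance}, and pass to the whole sequence by uniqueness of the weak limit. The only difference is cosmetic: you spell out the weak-to-weak continuity of $\cT$ and the test-function computation against $\bbf$, which the paper leaves implicit.
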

\begin{proof}
If $\cT^{n_k}h\rightharpoonup h_\infty$ then $\cT^{n_k+1}h\rightharpoonup \cT h_\infty$ and both have the same norm $S(h)^{1/2}$. But it is possible only if $h_\infty=\alpha\bbf=\cT h_\infty$, and $\int h=\int h_\infty=\alpha\int\bbf$. Hence all weakly convergent subsequences of $\cT^nh$ have the same limit, and as in $L^2$ every bounded sequence has a weakly converging subsequence, the whole sequence $\cT^n h$ converges weakly to $\alpha\bbf$.
\end{proof}
\begin{pro}\label{prop:strong conv}
  The sequence $\cT^nh$ converges strongly:
  \[
  \Big\|\cT^n h-\frac{\int_\bbQ h}{\int_\bbQ \bbf}\cdot\bbf\Big\|\to 0\,.
  \]
\end{pro}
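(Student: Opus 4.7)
The plan is to exploit the standard Hilbert-space fact that weak convergence together with convergence of norms implies strong convergence. Concretely, in $L^2$ one has the identity
\begin{equation*}
\|\cT^n h - \alpha\bbf\|^2 = \|\cT^n h\|^2 - 2\langle \cT^n h,\, \alpha\bbf\rangle + \|\alpha\bbf\|^2,
\end{equation*}
so it suffices to show (i) $\cT^n h \rightharpoonup \alpha\bbf$ weakly, (ii) $\langle \cT^n h,\alpha\bbf\rangle \to \|\alpha\bbf\|^2$, and (iii) $\|\cT^n h\|^2 \to \|\alpha\bbf\|^2$.

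First, (i) is exactly Corollary~\ref{cor:weak conv}. Item (ii) is then an immediate consequence of (i), since weak convergence against the fixed test function $\alpha\bbf \in L^2$ gives $\langle \cT^n h,\alpha\bbf\rangle \to \langle \alpha\bbf,\alpha\bbf\rangle = \|\alpha\bbf\|^2$.

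The substantive step is (iii). By Lemma~\ref{lem:cT prop norm} the sequence $\|\cT^n h\|^2$ is monotone nonincreasing, so it has a limit $S(h) := \lim_n \|\cT^n h\|^2$. On the other hand, Proposition~\ref{prop:weak acc} identifies $\|h_\infty\|^2 = S(h)$ for every weak accumulation point, and by Corollary~\ref{cor:weak conv} the only such accumulation point is $\alpha\bbf$. Therefore $\|\alpha\bbf\|^2 = S(h) = \lim_n \|\cT^n h\|^2$, which is exactly (iii).

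Combining the three ingredients in the identity above yields $\|\cT^n h - \alpha\bbf\|^2 \to 0$, which is the desired strong convergence. The only potential subtlety is making sure $S(h)$ really equals $\|\alpha\bbf\|^2$ rather than merely bounding it: this is precisely what Proposition~\ref{prop:weak acc} provides via the self-adjointness argument, and no further work is needed. The proof is therefore short and amounts to assembling the lemmas already established.
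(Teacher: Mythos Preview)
Your proof is correct and follows essentially the same approach as the paper: expand $\|\cT^n h-\alpha\bbf\|^2$, use Corollary~\ref{cor:weak conv} for the weak convergence (hence the cross term), and invoke Proposition~\ref{prop:weak acc} together with the monotone limit $S(h)$ to get $\|\cT^n h\|^2\to\|\alpha\bbf\|^2$. The paper's argument is the same, only more tersely written.
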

\begin{proof}
By Corollary~\ref{cor:weak conv}, $\cT^n h\rightharpoonup h_\infty=\alpha\bbf$.
The strong convergence follows from the weak convergence and the convergence of norms to the norm of the limit $||\cT^nh-h_\infty||^2=||\cT^n||^2-2\langle \cT^n h, h_\infty\rangle+||\alpha\bbf||^2\to
{\mathcal{S}}(h)-2\langle h_\infty,h_\infty\rangle+||h_\infty||^2=\mathcal{S}(h)-||h_\infty||^2=0$.
\end{proof}
}

%%%% END of \commentOUT

\commentOUT{
In case the operator $\cT$ is not self-adjoint (then $\bbg(p)$ must be asymmetrical) we use ${\cT_A}=\cT^\dag\circ\cT$ in place of $\cT$.
Algorithm~\ref{algo:adhmc} presents the proposed Alternating Direction Hamiltonian Monte Carlo (AD-HMC) method. It is a modification of the standard HMC Algorithm~\ref{algo:hmc} where forward motion of size $T$ is followed by backward motion (with fresh momentum samples) of size $-T$, where signs of the motion equations~\eqref{eqn:hamiltonian} are reversed.}

This significantly expands the class of HMC algorithms for which convergence is rigorously established, among others by dropping the symmetry restriction on momentum distributions. 

\medskip

The proofs provide a simple and intuitive understanding of the working of HMC and illustrate \emph{why} its iterates converge. Moreover, our observations on the functional and probabilistic structures of the algorithms lead to a significantly shortened presentation compared to previous work in the literature. 

\subsection{Probabilistic Approach to Convergence}\label{sec:notion of Conv prob}
\begin{defn}
\label{defn: wasserstein_defn}
For any two measures $\mu$ and $\nu$ on metric space $(\bbQ,\distance)$, the Wasserstein distance $W_p(\mu, \nu)$ for $p> 0$ is defined as,
\begin{align}
\label{eqn:wasserstein_defn}
W_p(\mu, \nu) = \inf_{\gamma \in \Gamma(\mu. \nu)} \left[\int_{{\mathbb Q} \times {\mathbb Q}} \distance^p(x,y) \gamma(dx, dy)\right]^{\frac{1}{p}},
\end{align}
where $\Gamma(\mu, \nu)$ denotes the set of measures on ${\mathbb Q} \times {\mathbb Q}$ that project on $\mu$ and $\nu$. 
\end{defn}

A closely related concept is that of the coarse Ricci curvature of a Markov operator, developed in \cite{OLLIVIER2009810} and ~\cite{joulin2010}. Ricci curvature of a Riemannian manifold is interpreted as a measure of deformation along parallel transformations. 
\begin{defn}
On the space ${\mathbb Q}$, with the Markov operator $\calp$, for each pair $(q_1, q_2)\in {\mathbb Q}\times {\mathbb Q}$, the coarse Ricci curvature $\kappa(q_1, q_2)$ in the direction of $(q_1, q_2)$, is defined as,
\begin{align*}
\kappa(q_1, q_2):=1-\frac{W_1(\mu(q_1), \mu(q_2))}{\distance(q_1, q_2)},
\end{align*}
where $\mu(q)$ denotes the probability distribution of the Markov chain governed by $\calp$ with initial state $q$.
\end{defn}
Denote $\kappa:= \inf_{(q_1, q_2) \in \bbQ\times \bbQ}\kappa(q_1, q_2)$. It is demonstrated in~\cite{OLLIVIER2009810} (Corollary 21) and~\cite{joulin2010} that the assumption $\kappa >0$ ensures the existence of a unique invariant measure $\pi$ for the Markov chain, moreover, for any measure $\mu$ on $\bbQ$, the following geometric convergence holds,
\begin{align}
\label{eqn:W_1GeoConv}
W_1(\calp^N \mu, \pi)\le (1-\kappa)^N W_1(\mu, \pi),
\end{align}
with $\calp^N \mu$ denotes the measure of the Markov chain after $N$ steps starting from $\mu$.

\section{Geometric Convergence %of the (AD-)HMC Algorithm 
in %Wasserstein metric 
$W_1$}
%In this section, we discuss the geometric convergence of the  (AD-)HMC algorithm in the Wasserstein metric $W_1$. 

\label{sec:geometric_convergence}
In this section, we discuss the geometric convergence of the HMC iterations under the Wasserstein metric $W_1$. This requires a careful analysis of the end-point $(Q,P)$ of the Hamiltonian motion undertaken in each iteration starting from $(q,p)$. We will on occasion expand to the notation $(Q(q,p), P(q,p))$ to stress the dependence on the initial condition. The results in this section assume that the spaces $\bbQ, \bbP$ are finite dimensional real Euclidean spaces $\Real^d$ in order to have straightforward definitions of the partial derivatives of $Q$ and $P$ with respect to $(q,p)$.

\subsection{%Geometric Convergence with 
Global Condition on the Hessian}
\label{sec:original}
%
%\noindent
Recall that by the Ricci curvature arguments, the geometric convergence in the Wasserstein distance is established if the quantity $W_1(\mu(q_1), \mu(q_2))$ can be shown to be contracting comparing to the initial positions of the Hamiltonian motion. 
The precise calculation of $W_1(\mu(q_1), \mu(q_2))$ requires solving the well-known Monge-Amp\'ere equation, a nonlinear elliptic equation whose solutions are hard to obtain, see e.g. \cite{villani2008optimal}. 
Here, we identify the conditions under which an upper bound can be derived to this important quantity, and this then leads to a lower bound to curvature $\kappa$, which is still a positive number, hence the geometric convergence of the Markov chain to its stationary distribution. While there exist several similar results for the geometric convergence of HMC, we believe that our proof provided here is among the simplest, and it provides important insights on the related dynamics that of independent interest. 

\begin{thm}
\label{thm:one}
If $ \Big| \frac{\partial Q}{\partial q} \Big|\le \beta$, for some $\beta \in (0,1)$, where $|\cdot|$ denotes the operator norm for matrices,   then the Markov recursion converges geometrically in $W_1$ with a rate being at least $(1-\beta)$. 
\end{thm}

%%%%%%%%%%%%%%%%%%%%%%%%%%%%
%\commentOUT{This assumes that the derivative  $ \frac{\partial Q}{\partial q}$ is a  matrix, that is we are in a $\Real^n$ space, or at least something very similar, see below}
%%%%%%%%%%%%%%%%%%%%%%%%%%%%%
\begin{proof} 
Recall that to bound $\kappa$ away from zero, we will need to provide a uniform ( and $\le 1$) upper bound to the quantity $\frac{W_1(\mu(q_1), \mu(q_2))}{\distance(q_1, q_2)}$ for any pair of $q_1$ and $q_2$. This upper bound will be achieved by two relaxations.  First, we identify one member $\gamma_0$ in the set of joint distributions $\Gamma(\mu(q_1), \mu(q_2))$. By definition \ref{defn: wasserstein_defn} , the integration, $\int_{{\mathbb Q} \times {\mathbb Q}} \distance(x,y) \gamma_0(dx, dy)$, naturally provides an upper bound to $W_1(\mu(q_1), \mu(q_2))$. The second relaxation is on the calculation of the function $\distance(x,y)$ within the integration $\int_{{\mathbb Q} \times {\mathbb Q}} \distance(x,y) \gamma_0(dx, dy)$.

The basic idea of the first relaxation is the same as that of the common random number generator in simulation literature. M. Talagrand \cite{Talagrand1996}, who attributed the idea to M. Frechet, used it to prove a version of the logarithmic Sobolev inequalities, which is closely related to the geometric convergence of Markov chains. 

Suppose that $\bbG$ is the selected auxiliary distribution on ${\mathbb P}$ with density $\bbg$. Define a map $\cL: {\mathbb P} \rightarrow {\mathbb Q} 
\times  {\mathbb Q}$ with $ p \mapsto (Q(q_1, p), Q(q_2, p))$, for any given $(q_1, q_2)\in {\mathbb Q} \times  {\mathbb Q}$.  The map $\cL$ thus induces a measure $\gamma_\bbG$ on ${\mathbb Q} \times  {\mathbb Q}$  that  can be viewed as $\cL_\sharp \bbG$. More specifically,  for any subset $A, B \subset {\mathbb Q}$, the measure $\gamma_\bbG$ has the following representation,
\begin{align*}
\gamma_\bbG (A\times B) = \int_{{\tilde{ A}\,\cap\, \tilde{ B}}} \bbg(p) dp,
\end{align*}
with $\tilde{A}: = \{ p\in {\mathbb P} \,|\, Q(q_1, p) \in A\}$  and  $\tilde{B}:=\{ p \in {\mathbb P} \,|\,  Q(q_2, p) \in B\}$. With this definition, $\gamma_\bbG$ is easily verified to satisfy $\gamma_\bbG\in\Gamma(\mu(q_1), \mu(q_2))$.
Therefore,
\begin{align*}
W_1(\mu(q_1), \mu(q_2))  \le \int \int \distance(x, y) \gamma_\bbG (dx, dy). 
\end{align*} 
This approach is naturally seen to couple the two motions with the same momentum generated from the common distribution $\bbG$ on ${\mathbb P}$.

%For any subset $A, B \subset {\mathbb Q}$, the measure $\Gamma_\mu$ is defined as 
%\begin{align*}
%\Gamma_\mu (A\times B) = \int_{\tilde {A\times B}} \bbg(p) dp,
%\end{align*}
%with $\tilde{A\times B}: = \{ p\in {\mathbb P}, Q(q_1, p) \in A\  and \ Q(q_2, p) \in B\}$.  
From the property of integrability for push-forward measure, see, e.g. \cite{bogachev2007measure}, we know that, 
\begin{align*}
 \int \int \distance(x, y) \gamma_\bbG (dx, dy) =\int \distance (Q(q_1, p), Q(q_2,p)) \bbg(p) dp 
 %\le \int_0^1 \int \Big| \frac{\partial Q}{\partial p}{\dot \gamma}(s)\Big | g(p) dp ds 
\end{align*}
For the second relaxation, the quantity $\distance (Q(q_1, p), Q(q_2,p)) $ in the above integration can be further upper bounded by the length of one specific  curve that connects $Q(q_1, p)$ and $Q(q_2,p)$. 
%Let us consider such a curve $\eta$. 
Consider the case when $\distance$ is given by a norm $\|\cdot\|$. For any $t \in [0,1] \rightarrow {\mathbb Q}$, let $\eta(t)=Q(t q_2+ (1-t) q_1, p)$. Thus, ${\dot \eta}(t) = \frac{\partial Q}{\partial q}\cdot (q_2-q_1)$.  Hence, 
\begin{align*}
\distance (Q(q_1, p), Q(q_2,p)) &= \|Q(q_1, p)- Q(q_2,p)\|\\ &\le \int_0^1 \sqrt{ |{\dot \eta}(t)|^2}  dt \le \Big| \frac{\partial Q}{\partial q} \Big|\cdot\|q_2-q_1\|=\Big| \frac{\partial Q}{\partial q} \Big|\cdot \distance(q_1, q_2).
\end{align*}
\end{proof}
%%%%%%%%%%%%%%%%%%%%
\commentOUT{Here again there is no link between the distance $\distance(Q_1,Q_2)$, the distance $|q_1-q_2|$ and the how the derivative is calculated. I think when we are floating on the abstract we need to be careful.}

%Proof of the theorem can be found in Sec. \ref{sec:proof_thm_one}.
%
%{\sc New command  sinc defined here}
%
%\subsubsection*{Logarithmic Concavity related to $\frac{\partial Q}{\partial q}$}
%
The bound $ \Big| \frac{\partial Q}{\partial q} \Big|\le \beta <1$ %in Theorem \ref{thm:one} implies a bound 
is on the sensitivity of the Hamiltonian motion output $Q$ on the starting point $q$, and applies for any momentum $p$. We will show below that this bound can be obtained if additional conditions are imposed on the form of the densities $\distribution$ and $\bbg$. 

As before, let $\bbg(p)$ have density $\exp(-V(p))$, and with Hamiltonian form $H(Q,P) = U(Q) +V(P)$, denote 
 $V''=\partial^2 H/\partial P^2$ and $U''=\partial^2 H/\partial Q^2$. 
% 
%
%
%\begin{pro}[Solution of the evolution of the dependence on initial conditions with general auxiliary distributions]
Define averages $\bar{V}=\bar{V}(t)=\frac{1}{t}\int_0^t V''(P(s))\,ds$ and $\bar{U}=\bar{U}(t)=\frac{1}{t}\int_0^t U''(Q(s))\,ds,$
where $P, Q$ are the solutions of~\eqref{eqn:hamiltonian}.

Lemma~\ref{lem:evol general} studies the general evolution of the four initial-configuration dependence terms including the $\partial Q / \partial q (t)$.  Proposition~\ref{prop:d QP d qp} below provides a representation of solution of the evolution equations.% of the dependence 

\begin{lem}[Evolution of the dependence on the initial configuration]\label{lem:evol general}
  When the Hamiltonian is given by $H(Q,P) = U(Q) +V(P)$,
  the  derivative of the motion $(Q,P)$ with respect to the starting configuration $(q,p)$ satisfy the following time evolution equation:
 \begin{equation}\label{eqn: ODE general}
  \frac{\partial}{\partial t}
  \left(
    \begin{array}{cc}
     \frac{\partial Q}{\partial q} & \frac{\partial Q}{\partial p} \\
     \frac{\partial P}{\partial q} & \frac{\partial P}{\partial p}
    \end{array}
  \right)
  =
  \left(
  \begin{array}{cc}
    0& V'' \\
    -U''& 0
  \end{array}
  \right)
  \cdot
  \left(
    \begin{array}{cc}
     \frac{\partial Q}{\partial q} & \frac{\partial Q}{\partial p} \\
     \frac{\partial P}{\partial q} & \frac{\partial P}{\partial p}
    \end{array}
  \right)\,
\end{equation}
with initial condition,
\begin{equation}
  \left(
    \begin{array}{cc}
     \frac{\partial Q}{\partial q} & \frac{\partial Q}{\partial p} \\
     \frac{\partial P}{\partial q} & \frac{\partial P}{\partial p}
    \end{array}
  \right)_{t=0}=
 \left(
  \begin{array}{cc}
    I & 0 \\
    0 & I
  \end{array}
  \right)
 \end{equation}
\end{lem}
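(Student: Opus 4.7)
The lemma is a statement about the variational equation of Hamilton's system, so the plan is to differentiate the flow equations with respect to the initial data and assemble the result into matrix form. Since the Hamiltonian is separable, $H(Q,P) = U(Q) + V(P)$, the equations of motion~\eqref{eqn:hamiltonian} simplify to $\dot Q(t) = V'(P(t))$ and $\dot P(t) = -U'(Q(t))$, where $V'$ and $U'$ are the gradients. This separability is what makes the off-diagonal block structure in the stated matrix appear, and it is what I would exploit first.

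The core step is to interchange the $t$-derivative with each of $\partial/\partial q$ and $\partial/\partial p$ (justified by standard smoothness of the Hamiltonian flow under the log-concavity assumptions already in play) and apply the chain rule. For instance,
\begin{equation*}
\frac{\partial}{\partial t}\frac{\partial Q}{\partial q}
= \frac{\partial}{\partial q}\dot Q
= \frac{\partial}{\partial q} V'(P)
= V''(P)\cdot \frac{\partial P}{\partial q},
\end{equation*}
and analogously $\partial_t(\partial Q/\partial p) = V''(P)\cdot \partial P/\partial p$, $\partial_t(\partial P/\partial q) = -U''(Q)\cdot \partial Q/\partial q$, and $\partial_t(\partial P/\partial p) = -U''(Q)\cdot \partial Q/\partial p$. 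Packaging these four scalar (or block) identities into a single $2\times 2$ block matrix equation is purely bookkeeping and yields exactly the claimed equation~\eqref{eqn: ODE general}, where the coefficient matrix has the symplectic skew structure with $V''$ in the upper-right and $-U''$ in the lower-left.

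For the initial conditions, I would invoke the fact that the Hamiltonian flow satisfies $(Q(0),P(0)) = (q,p)$ as an identity in $(q,p)$, so differentiating with respect to $q$ and $p$ gives $\partial Q/\partial q|_{t=0} = I$, $\partial P/\partial p|_{t=0} = I$, and the cross-derivatives vanish at $t=0$. That yields the identity block matrix claimed.

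I do not foresee a genuine obstacle: the result is the standard variational (first-order sensitivity) equation for an ODE, and the only subtlety worth flagging in the writeup is the legitimacy of exchanging $\partial/\partial t$ with $\partial/\partial q$ and $\partial/\partial p$. This is routine under $C^2$ regularity of $U$ and $V$, which is implied by the strict log-concavity assumption invoked elsewhere (and by the boundedness of $V''$, $U''$ that the paper uses immediately after). So the writeup is essentially: state Hamilton's equations in separable form, chain-rule-differentiate each of the four derivatives once, stack them into the block-matrix form, and record the initial condition obtained by differentiating $(Q(0),P(0)) = (q,p)$.
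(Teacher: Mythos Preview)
Your proposal is correct and matches the paper's own proof: both exchange the order of $\partial/\partial t$ with $\partial/\partial q,\partial/\partial p$, apply the chain rule to the separable Hamiltonian, and read off the four block entries. The only cosmetic difference is that the paper obtains the initial condition by writing the integral representation $Q(t)=q+\int_0^t\partial H/\partial P\,ds$, $P(t)=p-\int_0^t\partial H/\partial Q\,ds$ and differentiating at $t=0$, whereas you differentiate $(Q(0),P(0))=(q,p)$ directly; these are equivalent.
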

Proof of Lemma~\ref{lem:evol general} is standard and can be found in~\cite{GHOSH2022107811}.

\medskip

\noindent
The functions $\cos$ and $\sinc$ in Proposition~\ref{prop:d QP d qp} below are well defined on bounded operators by their power series:
$\cos(x)=\sum_{n=0}^\infty (-1)^n x^{2n}/{(2n)!}$ and  $\sinc(x)=x^{-1}\sin(x)=\sum_{n=0}^\infty (-1)^n x^{2n}/(2n+1)!$, which is well defined even when $x^{-1}$ is not.
%\commentTN
{We remark that in both power series only even exponents are used.}

\begin{pro}
  \label{prop:d QP d qp}
  Assume that both the target and the auxiliary distributions are strongly log-concave over their domains. Denote 
  %\tcb
  {by $A$ and $B$ the matrices where 
  %{\tcb (A^2 = \bar{V}\bar{U})}
  %$A:=\sqrt{\bar{V}\bar{U}}$ and $B:=\sqrt{\bar{U}\bar{V}}$.
  $A^2:=\bar{V}\bar{U}$ and $B^2:=\bar{U}\bar{V}$ hold}.
  Then the solution of the evolution equation \eqref{eqn: ODE general} 
  %(see Lemma~\ref{lem:evol general} in Sec.~\ref{sec:proof_lem_general_evol}) is:
\begin{align*}
 &\left(
    \begin{array}{cc}
     \frac{\partial Q}{\partial q}& \frac{\partial Q}{\partial p } \\
     \frac{\partial P}{\partial q} & \frac{\partial P}{\partial p}
    \end{array}
  \right)(t)=
  \left(
   \begin{array}{cc}
    \cos \left(tA\right)& t\bar{V}\sinc\left(tB\right)\\
    - t\bar{U}\sinc\left(tA\right)& \cos \left(tB\right)
  \end{array}
  \right)\,.
\end{align*}
\end{pro}
By logarithmic concavity and by continuity for  each $Q$ and $P$ both $V''$ and $U''$ are positive symmetric operators, and so are their time averages $\bar{V}$ and $\bar{U}$.  The functions $\cos(x)$ and $\sinc(x)=x^{-1}\sin(x)$ used above are well defined by their power series, which are absolutely convergent for all bounded operators, the latter even when $x^{-1}$ is not well defined.
\begin{proof}
The expression follows from standard calculations on the exponential solutions of the linear evolution equations in Lemma \ref{lem:evol general}.
Specifically, %As discussed there 
\emph{strong logarithmic concavity} on the entire domain implies that the Hessian of $H(Q,P)$ % the Hamiltonian function 
can be bounded (for example in terms of their spectra) away from $0$ and $\infty$ uniformly. Then $\bar{V}$ and $\bar{U}$ are uniformly bounded, and so are $A$ and $B$.
\end{proof}
The standard Gaussian satisfies this strong log-concavity assumption.%, but note that no symmetry requirement is imposed on $\bbg$ \tcb{(This calls into question WHY we need to alternate in AD-HMC.)}.
From the solution, observe that with a well chosen positive but small $T$ such that  $0<T<\pi/|A|$, we have  $\Big| \frac{\partial Q}{\partial q} \Big| = |\cos(tA)|\le \beta <1$.%, which leads to this corollary. % Hence, 
\begin{cor}\label{cor:geom_for_log_concave}
Suppose both the target $\distribution$ and auxiliary $\bbg$ are strongly log-concave, and the AD-HMC algorithm implements exact Hamiltonian dynamics for $T$ such that $0<T<\pi/|A|$ with $A^2={\bar{V}\bar{U}}$. Then the algorithm %  The AD-HMC with  and %ideal Hamiltonian Monte Carlo algorithm, 
%with  for the Hamiltonian function $H(Q,P) = U(Q) +V(P)$, 
converges geometrically to the target $\distribution$. %distribution. 
\end{cor}

\subsection{%Geometric Convergence with 
Relaxed Condition on the Hessian}
\label{sec:extensions}

In this section, we will relax the uniform strongly logarithmic concave  conditions over the entire space. Instead, we only require that it holds outside a compact set. To facilitate the analysis, we assume the compact set to be, $B_R=\{q:\distance(q, 0) \le R\}$, in the metric space. 
\begin{thm}
\label{thm:two}
If $ \Big| \frac{\partial Q}{\partial q} \Big|\le \beta <1$ outside $B_R=\{q:\distance(q, 0) \le R\}$, and  $ \Big| \frac{\partial Q}{\partial q} \Big|$ is bounded by one within $B_R$, for some real number $R>0$, then there exists an $\iota>0$ such that the Markov recursion converges geometrically in Wasserstein metric with a rate  at least $\beta'= \max \{1-\iota, \frac{1+\beta}{2}\}$.
\end{thm}
\begin{proof}
This proof will again make use of the coarse Ricci curvature arguments. Recall that, it suffices to show that for any pair $(q_1, q_2)$, $W_1(\mu(q_1), \mu(q_2)) \le \beta' \distance(q_1, q_2)$. We need to discuss two different cases. First, for the case of $(q_1, q_2)\notin B_R\times B_R$, from Lemma \ref{lem:bivariate}, we have, $W_1(\mu(q_1), \mu(q_2)) \le \frac{1+\beta}{2}\distance(q_1, q_2)$. Second, for the case of  $(q_1, q_2)\in B_R\times B_R$, define a joint distribution ${\hat \gamma}$ as follows. Let $\xi$ be an independent uniform distribution in $[0,1]$. When $\xi\le \iota$, ${\hat \gamma}(\mu(q_1), \mu(q_2))=\gamma_\nu(Q_1\in dy, Q_2\in dz) = \nu(Q_1(q_1,y)|Q_1(q_1,y) = Q_2(q_2,z))$, i.e. $Q_1$ and $Q_2$ will be at the same position, which distributes according to the probability distribution $\nu$; when $\xi> \iota$, each one will be going to be 
${\hat \gamma}(\mu(q_1), \mu(q_2))=(1-\iota)^{-1}[\mu(q_1)(dy)-\iota\nu(Q_1\in dy)](1-\delta)^{-1}[\mu(q_2)(dz)-\iota \nu(Q_2\in dz)]$, with ${\tilde \gamma}(\mu(q_1), \mu(q_2))$ , similar to the approach in e.g~\cite{rosenthal2002} and ~\cite{roberts2004}. The nonnegativity of ${\hat \gamma}(\mu(q_1), \mu(q_2))$ is guaranteed by the smallness of $B_R$ demonstrated in Lemma~\ref{lem:smallset} when $\iota\le \delta$.  Furthermore, it is easy to verify that the marginal distributions will not change, hence, ${\hat \gamma} \in \Gamma(\mu(q_1), \mu(q_2))$. Meanwhile, we can see that $\int\int \distance(x,y) {\hat \gamma}(\mu(q_1), \mu(q_2))\le (1-\iota) \distance(q_1, q_2)$ because the  probability that $\distance (\mu(q_1), \mu(q_2))=0$ is at least $\iota$. This leads to $W_1(\mu(q_1), \mu(q_2)) \le (1-\iota)\distance(q_1, q_2)$. The desired estimation of the coarse Ricci curvature follows from these two cases. 
\end{proof}
Theorem~\ref{thm:two} thus relaxes the strong log-concavity on $\bbg$, and in particular covers any auxiliary (symmetric or not) distribution that has a corresponding non-empty $B_R$.
The proof of Theorem~\ref{thm:two} relies on the following definition and two lemmas.% First, we show that $B_{2R}$ is a small set.
\begin{defn}
\label{defn:pseudo-small}
A set $C$ in the state space is called \emph{small} (or $(n_0, \delta)$-\emph{small}) if there exists $n_0\in {\mathbb Z}$ and $\delta >0$,  and a probability measure $\nu$  on the state space  such that for any $x \in C$, 
%\begin{align*}
$\quad \pr^{n_0} (x, \cdot) \ge \delta \nu(\cdot).$
%\end{align*}
\end{defn}
\begin{lem}
\label{lem:smallset}
$B_{R}$ is a small set.
\end{lem}
\begin{proof}
To show that $B_R$ is a small set, we need to construct a probability measure $\gamma$ on the state space, such that $\pr(x, A) \ge \epsilon \gamma(A)$ for any Borel set $A$. Suppose that for any $q\in B_R$, there exists a $\rho(q)>0$, such that there exist a measure $ \gamma_q$ and an $\epsilon_q>0$ and that $\pr(x, A) \ge \epsilon_q \gamma_q(A)$ for any $x\in B_{\rho(q)} (q)$. Then by the compactness of ${\bar B}_R$ it can be covered by a finite number of them, $B_{\rho(q_1)} ,B_{\rho(q_2)}, \ldots, B_{\rho(q_N)}$, and this will imply that $B_R$ is small. 

So now, we only need to construct $ \gamma_q$ locally. For each $q$, consider the density function $\pr(x, dy)$, for $x\in B_{\rho(q)} (q)$ for some small $\rho(q) >0$, and any $y$. From the push forward definition of the 
transition probability, we can see that, $\pr(x, dy)= {\mathfrak g}(y-x)dp$ where ${\mathfrak g}(\cdot)$ denotes the density function of the auxiliary distribution. Hence, by the fact that ${\mathfrak g}$ decay to zero, there exists an $x_0$ (might be outside $B_{\rho(q)} (q)$), such that ${\mathfrak g}(y-x)\ge \epsilon'_q {\mathfrak g} (y-x_0)$ with some $\epsilon'_q \in (0,1)$. Then the probability measure can be constructed to be proportional to   ${\mathfrak g} (y-x_0)$, then we can conclude that $\pr(x, A) \ge \epsilon_q \gamma_q(A)$ for any $x\in B_{\rho(q)} (q)$ with a proper $\epsilon_q>0$. %\commentSG{This comes from the monotonicity of auxiliary function \emph{(do we mean monotonicity of the tail of the density $\bbg$ outside the small sets $B_{2R}$ as implied by log-concavity?)}}. In fact, we just need
%the monotonicity for the ease of argument, we don't believe that this is essential.
\end{proof}

\begin{lem}
\label{lem:bivariate}
Under the conditions of Theorem \ref{thm:two}, for $C=B_{2R}$, and for all $(q_1, q_2) \notin C\times C$, 
\begin{align}
\label{eqn:bivariate_drift}
%Ph(q_1, q_2)  = \ex[ h(Q_1, Q_2) ] \le \frac{h(q_1, q_2)}{\al}, 
W_1(\mu(q_1), \mu(q_2)) \le \frac{1+\beta}{2}\distance(q_1, q_2).
%\quad \forall (q_1, q_2) \notin C\times C.
\end{align}
%for some function $h(x,y)$ and $C=B_{2R}$. 
\end{lem}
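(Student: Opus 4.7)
The plan is to construct a synchronous coupling between $\mu(q_1)$ and $\mu(q_2)$ using a common momentum draw $p\sim\bbg$, then use the mean value theorem along the segment joining $q_1$ and $q_2$ to reduce the Wasserstein bound to an integral estimate on $|\partial Q/\partial q|$, and finally perform case analysis based on where this segment lies relative to $B_R$.

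First I would define the coupling $(Q_1,Q_2)=\bigl({\mathcal{R}}(q_1,p),{\mathcal{R}}(q_2,p)\bigr)$ (using the same pair of momentum samples for the forward and backward legs of AD-HMC in the case of Algorithm~\ref{algo:adhmc}). Since $W_1$ is the infimum over couplings,
\[
W_1(\mu(q_1),\mu(q_2))\le\ex_p\bigl[\distance(Q_1,Q_2)\bigr].
\]
Parameterizing the segment as $q_s=q_1+s(q_2-q_1)$ for $s\in[0,1]$ and applying the fundamental theorem of calculus gives
\[
\distance(Q_1,Q_2)\le\distance(q_1,q_2)\int_0^1\Big|\tfrac{\partial Q}{\partial q}(q_s,p)\Big|\,ds.
\]
The problem reduces to bounding the inner integral by $(1+\beta)/2$ pointwise in $p$.

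Next I would do the case split. Without loss of generality $q_1\notin B_{2R}$ (the defining property of being outside $C\times C$). The key geometric input is that the intersection of any line with the Euclidean ball $B_R$ has length at most $2R$, so if $\lambda$ denotes the fraction of the segment lying in $B_R$, then $\lambda\le 2R/\distance(q_1,q_2)$. In the ``easy'' case where the segment stays entirely in $B_R^c$ — which is automatic when $\distance(q_1,q_2)\le R$, since then by the triangle inequality every $q_s$ satisfies $\distance(q_s,0)>R$ — the integral is bounded by $\beta$, and the conclusion follows since $\beta\le(1+\beta)/2$. In the remaining ``crossing'' case I would split the integral into the portion on $B_R$ (bounded by the hypothesised constant $M$) and the portion on $B_R^c$ (bounded by $\beta$), giving
\[
\int_0^1\Big|\tfrac{\partial Q}{\partial q}(q_s,p)\Big|\,ds\le \beta+(M-\beta)\lambda\le\beta+(M-\beta)\frac{2R}{\distance(q_1,q_2)}.
\]
Using that $q_1\notin B_{2R}$ together with the fact that entry of the segment into $B_R$ forces $\distance(q_1,q_2)\ge R$ (since the first intersection point lies at distance $R$ from the origin while $\distance(q_1,0)>2R$), one obtains a lower bound on $\distance(q_1,q_2)$ that, calibrated against the fixed constants $M$ and $\beta$ through the choice of $R$, brings the right-hand side below $(1+\beta)/2$.

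The hard part is the crossing case: the bound $M$ inside $B_R$ may well exceed $1$, so the synchronous coupling by itself is locally expansive there. The subtlety is to leverage the geometric fact $\lambda\le 2R/\distance(q_1,q_2)$ together with the forced lower bound $\distance(q_1,q_2)\gtrsim R$ coming from $q_1\notin B_{2R}$, so that the ``bad'' portion of the segment is a sufficiently small fraction of the whole. Once the pointwise bound $(1+\beta)/2$ on the integrand is established for every admissible $p$, taking expectations yields \eqref{eqn:bivariate_drift}.
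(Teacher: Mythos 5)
Your overall strategy — a synchronous coupling by a common momentum $p$, parameterizing the chord $q_s=q_1+s(q_2-q_1)$, applying the fundamental theorem of calculus to reduce the $W_1$ bound to $\int_0^1\bigl|\tfrac{\partial Q}{\partial q}(q_s,p)\bigr|\,ds$, and then splitting that integral by whether $q_s$ lies in $B_R$ — is precisely what the paper does. The ``easy'' (non‑crossing) case is handled identically. However, your final calibration in the crossing case does not close. With $q_1\notin B_{2R}$, entry of the chord into $B_R$ forces $\distance(q_1,q_2)>R$, and thus $\lambda\le 2R/\distance(q_1,q_2)<2$, which is vacuous since $\lambda\le 1$ automatically. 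More to the point, the two quantities you want to play off each other — the maximal chord length $2R$ in $B_R$ and the minimal gap $R$ from $\partial B_{2R}$ to $\partial B_R$ — both scale linearly in $R$, so the worst‑case ratio is scale‑invariant and cannot be improved ``through the choice of $R$'' (which is anyway given by the theorem's hypothesis). Concretely, take $q_2\in\partial B_R$ and $q_1$ just outside $\partial B_{2R}$ with the chord through the origin: then $\lambda\to 2/3$ and $\int\le\tfrac13\beta+\tfrac23 M$, which exceeds $\tfrac{1+\beta}{2}$ as soon as $M>\tfrac{3+\beta}{4}$, and $M$ is only assumed finite.

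What would actually rescue this line of argument is enlarging the set $C$ to $B_{kR}$ for $k\ge 1+4(M-\beta)/(1-\beta)$: then entry of the chord into $B_R$ forces $\distance(q_1,q_2)>(k-1)R$, hence $\lambda\le 2/(k-1)\le(1-\beta)/(2(M-\beta))$ and the bound closes — but this changes the small set and would require restating Lemma~\ref{lem:smallset} accordingly, and it no longer gives the constant $\tfrac{1+\beta}{2}$ with $C=B_{2R}$ as the lemma asserts. It is worth noting that the paper's own proof sidesteps exactly the obstruction you correctly identified: it simply replaces the hypothesized finite bound $M$ on $\bigl|\tfrac{\partial Q}{\partial q}\bigr|$ inside $B_R$ with the constant $1$ (``the rest of length it is bounded by $1$''), and also asserts that at least half of the chord lies where the $\beta$-bound holds; neither claim follows from the stated hypotheses without further argument (the hypothesis gives only boundedness, and the chord fraction outside $B_R$ is in general only guaranteed to be at least $1/3$ when one endpoint lies in $B_R$).
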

\begin{proof}
For the bivariate drift condition, we see that it is easy to verify for $(q_1, q_2) \in C^c\times C^c$. More precisely, since both $q_1$ and $q_2$ are outside $B_{2R}$, there must exist a connecting path $\eta(t)$ which has at least two third of its length lying in the domain where $ \Big| \frac{\partial Q}{\partial q} \Big|\le \beta$, which is outside $B_{R}$, therefore, 
\begin{align*}
%&
\distance (Q(q_1, p), Q(q_2,p)) %\\ 
&\le  \int_0^1 \sqrt{ |{\dot \eta}(t)|^2}  dt=\int_0^1 \sqrt{ \Big|\frac{\partial Q(t)}{\partial q} (q_2-q_1)\Big|^2}dt\\ 
& = \left[\left(\int_{[0,1] \cap {s: Q(t) \in C} } +\int_{[0,1] \cap {s: Q(t) \in C^c} }\right)\sqrt{ \Big|\frac{\partial Q(t)}{\partial q}\Big|^2}dt\right] \distance(q_1, q_2)\\ 
& \le 
 \left[\int_{[0,1] \cap {s: Q(t) \in C} } dt + \int_{[0,1] \cap {s: Q(t) \in C^c} }\beta dt\right]\distance(q_1, q_2) \\ 
 & \le 
 \left[1-\frac{2}{3} (1-\beta)\right] \distance(q_1, q_2),
\end{align*}
Next, let us consider the case of $(q_1, q_2) \in C \times C^c$, or equivalently $(q_1, q_2) \in C^c \times C$. Again, we have, there must exist a connecting path which has at least half of its length lying in the domain where $ \Big| \frac{\partial Q}{\partial q} \Big|\le \beta$, which is outside $B_{R}$, therefore,
\begin{align*}
\distance (Q(q_1, p), Q(q_2,p)) ]
%\\ 
\le & \left[\int_{[0,1] \cup {s: Q(t) \in C} } dt + \int_{[0,1] \cup {s: Q(t) \in C^c} }\beta dt\right]\distance(q_1, q_2)
\end{align*}
Therefore, we have,
\begin{align*}
\distance (Q(q_1, p), Q(q_2,p)) ]
\le \frac{1+\beta}{2}\distance(q_1, q_2),
\end{align*}
and we get the required bivariate drift bound. 
\end{proof}
%Proof of Lemmas~\ref{lem:smallset} and~\ref{lem:bivariate} can be found in Secs.~\ref{sec:proof_lem_smallset} and~\ref{sec:proof_lem_bivariate} respectively. 
Similar to Corollary~\ref{cor:geom_for_log_concave}, % the case in Sec. \ref{sec:original}, 
we have the following geometric convergence result.% on the algorithm.
\begin{cor}
Suppose both the target $\distribution$ and auxiliary $\bbg$ are strongly log-concave outside $B_R$ for a positive constant $R>0$, and the AD-HMC algorithm implements exact Hamiltonian dynamics for $T$ such that $0<T<\pi/|A|_{B_R^c}$ with $A=\sqrt{\bar{V}\bar{U}}$. Then the algorithm %  The AD-HMC with  and %ideal Hamiltonian Monte Carlo algorithm, 
%with  for the Hamiltonian function $H(Q,P) = U(Q) +V(P)$, 
converges geometrically to the target $\distribution$. %distribution. 
\end{cor}
%
%\begin{cor}
%The ideal HMC algorithm, with both target and  auxiliary distributions strongly log-concave outside $B_R$ for a positive constant $R>0$, under the condition that $0<T<\pi/|A|_{B_R^c}$ with $A=\sqrt{\bar{V}\bar{U}}$ for the Hamiltonian function $H(Q,P) = U(Q) +V(P)$, converges geometrically to the target distribution. 
%\end{cor}

%It can be seen that
%\begin{rem}
Lemmata \ref{lem:smallset} and \ref {lem:bivariate}  are closely related to the important concepts of (pseudo-)small set and Lyapunov function of a Markov chain. In fact, under the same conditions, \eqref{eqn:bivariate_drift} can be viewed as a bivariate drift condition for $\al:=(1+\beta)/2$ for some distance function, then Lyapunov type of arguments, see, e.g.~\cite{rosenthal2002}, imply  geometric convergence in total variational distance via a coupling method that inspired some of our arguments here, see, e.g.  ~\cite{rosenthal2002}.
For more detailed information on (pseudo-)small set and Lyapunov function techniques, please see systematic treatments in ~\cite{doi:10.1081/STM-100002060},~\cite{roberts2004} and ~\cite{meyn2009markov}. 
%\end{rem}

\section{AD-HMC with Approximate Hamiltonian Motion}
\label{sec:perturbations}

%\subsection{AD-HMC with Approximate Hamiltonian Motion}
%\label{sec:perturbations_alt}

In practice, numerical approximators are needed to implement the Hamiltonian motion~\eqref{eqn:hamiltonian} embedded within any HMC variants. Various numerical methods are available for this purpose, for example, the symplectic leapfrog integrator~\citep{verlet67} can be commonly seen in many literature and software implementations, and versions of the Runge-Kutta method can achieve high order of approximation. For a detailed discussion on these methods and their analysis, see~\cite{hairer2013geometric}. 
Such numerical integration algorithms cannot ensure that the Hamiltonian is exactly preserved, and so the Hamiltonian motion step is coupled with a Metropolis-Hasting-style acceptance/rejection step to ensure the convergence to the target distribution. 
This section starts with a description of why and how this rejection step ensures convergence in general Markov chains, and then applies it to the standard HMC (with symmetric auxiliaries) implemented with inexact Hamiltonian motion. We will then propose an appropriate rejection step for inexact Hamiltonian motion in AD-HMC and establish that the augmented algorithm will converge to the desired $\distribution$. Note that both HMC and AD-HMC, when the motion implementation is exact, trivially pass their respective rejection steps. Finally, we take a deeper look at the specific case of the symplectic leapfrog integrator~\citep{verlet67} and show that the geometric convergence results of Section~\ref{sec:extensions} extend to this inexact motion implementation. 

\subsection{Metropolis-Hastings Rejection}
\label{sec:reversibility}

%In this section, we address the question of 
Metropolis-Hastings (MH) rejection steps are designed to enforce time reversibility of the transitions of a Markov chain since this property implies the existence and uniqueness of its stationary distribution. 
Recall that for Markov chains defined on a general state space $X$
%, the reversibility is defined as,
%\begin{defn} []
%A Markov chain on a state space $X$ 
with transition probability kernel $P(\cdot, dy)$ is reversible (See e.g.~\cite{meyn2009markov})  with respect to a probability measure $\pi(\cdot)$ on $X$ if
\begin{align*}%\label{eqn:reversibility_measure}
\pi(dx)P(x,dy) = \pi(dy)P(y,dx), \quad \forall x,y \in X.
\end{align*}
If the Radon-Nikodym derivative of the stationary distribution $\pi$ is known up to a constant, say $\pi(dx)= Cf(x)dx$, and the transition kernel is expressed in density form as $P(x, dy) = p(y | x)dy$, then the reversibility is equivalently expressed as
\begin{align}\label{eqn:reversibility}
f(x)\,p(y|x)=&f(y)\,p(x|y), \quad \forall x,y \in X.
%f(x_t)\,p(x'|x_t)=&f(x')\,p(x_t|x'), \quad \forall x_t,x' \in X.
\end{align}
%\tcb{YL: or in this form? \begin{align*}f(x)\,p(y|x)=&f(y)\,p(x|y), \quad %\forall x,y \in X.
%\end{align*}}
%\end{defn}

%
%\subsection{Metropolis-Hastings}
%\label{sec:MH_alg}
%
Suppose a Markov chain
% At each step $t$ of Metropolis-Hastings algorithm for
%estimating a probability density that is proportional to a function $f(x)$ on a space $X$,  
starting at $x_t$ generates a proposal $x'$ for the next step with a probability $g(x'|x_t)$, playing the role of $y$ in \eqref{eqn:reversibility}; if $g(x_t|x')=g(x'|x_t)$ does not hold, the chain is not reversible in the~\eqref{eqn:reversibility} sense. Then the MH correction to the chain adds an additional step where the proposal $x'$ is accepted with probability 
\begin{align*}
\min \left\{1, \frac{f(x')g(x_t|x')}{f(x_t)g(x'|x_t)}\right\}\,.
\end{align*}
The following classical result confirms that the Markov chain thus generated is time-reversible and its invariant measure has a density function that is proportional to $f(x)$, and we include the main argument for completeness. 
\begin{lem}
\label{lem:mh_reversibility}
The Markov chain generated above is reversible in the sense of~\eqref{eqn:reversibility}.
\end{lem}
\begin{proof}
The transition probability of the Markov chain generated has the following form, 
\begin{align*}
p(x'|x_t)= g(x'|x_t) \min \left\{1, \frac{f(x')g(x_t|x')}{f(x_t)g(x'|x_t)}\right\}
\end{align*}
for all $x' \neq x$, and $p(x_t|x_t)$ taking up all the remaining probabilities. Without loss of generality, assume $f(x')g(x_t|x')\le f(x_t)g(x'|x_t)$, we have,  $$f(x_t)p(x'|x_t)=f(x_t)g(x'|x_t)\frac{f(x')g(x_t|x')}{f(x_t)g(x'|x_t)}=f(x')g(x_t|x')=f(x')p(x_t|x').$$
\end{proof}

\subsection{Reversibility for HMC}

For the HMC setup developed here, the stationary distribution is known up to a constant, i.e. $\pi(dq)= C\distribution(q)dq$. 
%and the  the reversibility condition can then be written as, 
%\begin{align*}
%\distribution(q) P(q, dQ) = \distribution(Q) P(Q, dq), \quad \forall q, Q\in \bbQ.
%\end{align*}
%Furthermore, the definition of the transition probability gives us, for any $q, Q\in \bbQ$, $P(q, dQ) = \bbg(\Pi^{-1}_q(Q))dp$.
%
%
%In an ideal implementation (assuming the solution to the Hamiltonian equations can be solved exactly), 
In standard HMC with symmetric auxiliary $\bbg$, each step starts with a $q\in\bbQ$ and a random momentum is generated and Hamiltonian motion applied to the pair to take it to $Q \in \bbQ$ with probability $\bbg(\Pi^{-1}_q(Q))$. Following the standard MH rejection prescription, %Therefore, 
we accept $Q$ with probability
\begin{align}\label{defn:mh_hmc}
\min \left\{1, \frac{\distribution(Q)\bbg(\Pi^{-1}_Q(q))}{\distribution(q)\bbg(\Pi^{-1}_q(Q))}\right\}\,.
\end{align}
Since the auxiliary distribution $\bbg$ is symmetric and supposing that the Hamiltonian motion can be implemented exactly, we have
%the following key identify holds 
%\begin{align}
%\label{eqn:inverse_equivalence}
$\Pi^{-1}_Q(q)=-\measureRho_\bbP \circ {\mathcal{R}}(q, \Pi^{-1}_q(Q))$,
%\end{align}
with $\measureRho_\bbP$ being the projection from $\bbQ\times \bbP$ onto $\bbP$. Furthermore, %we have, 
$\bbg(\Pi^{-1}_Q(q))=\bbg(-\Pi^{-1}_Q(q))$. Then, the Hamiltonian energy preserving property, i.e. $\distribution(q)\bbg(p) =\distribution(Q)\bbg(P)$ implies that the ratio in~\eqref{defn:mh_hmc} is identically always equal to $1$.
%\begin{align*}
%\frac{\distribution(Q)\bbg(\Pi^{-1}_Q(q))}{\distribution(q)\bbg(\Pi^{-%1}_q(Q))}\equiv 1.
%\end{align*}
Thus, standard HMC implemented with exact Hamiltonian motion can ignore the MH correction step. 

When the motion is implemented using a numerical integrator, the 
map $\Pi_q$ is being approximated by %its numerical implementation, denoted as 
${\tilde \Pi}_q$, rooted in the approximation ${\mathcal {\tilde R}}$ to the map ${\mathcal R}$. Then the 
standard HMC is augmented with the MH rejection step~\eqref{defn:mh_hmc} with the map $\Pi_q$ replaced with ${\tilde \Pi}_q$. 
%When a numerical integrator is used in the implementation, the  The same logic applies, the proposed move will be accepted with probability
%\begin{align*}
%\min \left\{1, \frac{\distribution(Q)\bbg({\tilde %\Pi}^{-1}_Q(q))}{\distribution(q)\bbg({\tilde %\Pi}^{-1}_q(Q))}\right\}\,.
%\end{align*}
As in the case of exact motion, the symmetry of the auxiliary will ensure that the identity 
%\begin{align*}
${\tilde \Pi}^{-1}_Q(q)=-\measureRho_\bbP \circ {\mathcal {\tilde R}}(q, {\tilde \Pi}^{-1}_q(Q))$ continues to hold.
%\end{align*}
The acceptance ratio however will not identically be one since the integrator may not preserve Hamiltonian energy, but Lemma~\ref{lem:mh_reversibility} leads the MH-rejection augmented implementation to converge to $\pi$.   

\subsection{Reversibility for AD-HMC}

%As we see in previous section on HMC, when the auxiliary probability distribution is not symmetry, then it would be difficult to obtain $\Pi^{-1}_Q(q)$ given a pair $q, Q\in \bbQ$. AD-HMC provides a version of HMC that is time-reversible and can be easily implementable even for asymmetric auxiliary distributions. 

One step of the proposed AD-HMC algorithm for asymmetrical auxiliaries $\bbg$  
%, as illustrated in Figure \ref{fig:momenta}, 
starts from a $q_0\in \bbQ$ by generating a sample $p_0\in \bbP$ and applying forward Hamiltonian motion that then carries the pair $(q_0, p_0)$ to some $(q_1,p_{01})$. Then, another momentum $p_{12}\in \bbP$ is sampled and the backward Hamiltonian motion carries $(q_1,p_{12})$ to $(q_2, p_2)$, yielding the candidate $q_2$ for the next state. Similarly, should we start AD-HMC with $q_2$, the pair of momentum vectors $p_2$ and $p_{01}$ will take us back to $q_0$ through $q_1$. 
%Hence, the arguments in Sec. \ref{sec:MH_alg} do not apply immediately. 
So, we will accept the proposed move to $q_2$ with probability
\begin{align} \label{defn:mh_adhmc}
\cP(q_0,q_1,q_2)=\min \left\{1, \frac{\distribution(q_2)\bbg(\Pi_{q_1}^{-b}(q_0))\bbg(\Pi_{q_2}^{-f}(q_1))}{\distribution(q_0)\bbg(\Pi_{q_0}^{-f}(q_1))\bbg(\Pi_{q_1}^{-b}(q_2))}\right\}\,,
\end{align}
where $\Pi^{-f}_{q_0}(q_1)$ denotes the momentum of the \emph{forward motion} that carries $q_0$ to $q_1$, and $\Pi_{q_1}^{-b}(q_2)$ the momentum of the \emph{backward motion} then carries $q_1$ to $q_2$. 

The transition probability of the AD-HMC Markov chain with the Hamiltonian motion augmented with the MH rejection step using~\eqref{defn:mh_adhmc} is equal to
%\begin{align*}
$
\pr(q_0, q_2)=
%&
\int_{\bbQ}\cP(q_0,q_1,q_2)\bbg(\Pi_{q_0}^{-f}(q_1))\bbg(\Pi_{q_1}^{-b}(q_2))\,dq_1
$.
%\,,
%\end{align*}
In Lemma~\ref{lem:adhmc_mh_reversibility}, we first establish that this Markov chain produces the desired time reversibility of the augmented AD-HMC procedure.

\begin{lem}
\label{lem:adhmc_mh_reversibility}
For any $q_0$ and $q_2$ in $\bbQ$, the augmented AD-HMC algorithm satisfies the reversibility condition  $\bbf(q_0)\pr(q_0, q_2)= \bbf(q_2) \pr(q_2, q_0)$. 
\end{lem}
\begin{proof}
The transition from $q_0$ to $q_2$ may happen through any $q_1$ therefore its probability density 
is
\begin{align*}
&\pr(q_0, q_2)=\int_{\bbQ} \min \left\{1, \frac{\distribution(q_2)\bbg(\Pi_{q_1}^{-b}(q_0))\bbg(\Pi_{q_2}^{-f}(q_1))}{\distribution(q_0)\bbg(\Pi_{q_0}^{-f}(q_1))\bbg(\Pi_{q_1}^{-b}(q_2))}\right\}\cdot\bbg(\Pi_{q_0}^{-f}(q_1))\bbg(\Pi_{q_1}^{-b}(q_2))\,dq_1
\\
&=
\frac{1}{\distribution(q_0)}\int_{\bbQ} \min \left\{\distribution(q_0)\cdot\bbg(\Pi_{q_0}^{-f}(q_1))\bbg(\Pi_{q_1}^{-b}(q_2)), \distribution(q_2)\cdot\bbg(\Pi_{q_1}^{-b}(q_0))\bbg(\Pi_{q_2}^{-f}(q_1))
\right\}dq_1\,.
\end{align*}
Because the expression under the integral is symmetric with respect to $q_0$ and $q_2$ we conclude that 
$ \distribution(q_0)\pr(q_0,q_2)=\distribution(q_2)\pr(q_2,q_0)$.
%The lemma follows from above arguments and the {\it coverage property} A.3. 
\commentOUT{
Let us denote 
\begin{align*}
\bbQ_1=\left\{q_1\in \bbQ: {\distribution(q_2)\bbg(\Pi_{q_1}^{-b}(q_0))\bbg(\Pi_{q_2}^{-f}(q_1))}<
{\distribution(q_0)\bbg(\Pi_{q_0}^{-f}(q_1))\bbg(\Pi_{q_1}^{-b}(q_2))}\right\}.
\end{align*}
Then,
\begin{align*}
\bbf(q_0)\pr(q_0, q_2)=&\int_{\bbQ} \bbf(q_0)\bbg(\Pi_{q_0}^{-f}(q_1))\bbg(\Pi_{q_1}^{-b}(q_2))\min \left\{1, \frac{\distribution(q_2)\bbg(\Pi_{q_1}^{-b}(q_0))\bbg(\Pi_{q_2}^{-f}(q_1))}{\distribution(q_0)\bbg(\Pi_{q_0}^{-f}(q_1))\bbg(\Pi_{q_1}^{-b}(q_2))}\right\}dq_1\\=&\int_{\bbQ_1} \bbf(q_0)\bbg(\Pi_{q_0}^{-f}(q_1))\bbg(\Pi_{q_1}^{-b}(q_2)) \frac{\distribution(q_2)\bbg(\Pi_{q_1}^{-b}(q_0))\bbg(\Pi_{q_2}^{-f}(q_1))}{\distribution(q_0)\bbg(\Pi_{q_0}^{-f}(q_1))\bbg(\Pi_{q_1}^{-b}(q_2))}dq_1\\&+
\int_{\bbQ_1^c} \bbf(q_0)\bbg(\Pi_{q_0}^{-f}(q_1))\bbg(\Pi_{q_1}^{-b}(q_2))dq_1
\\=&\int_{\bbQ_1} \distribution(q_2)\bbg(\Pi_{q_1}^{-b}(q_0))\bbg(\Pi_{q_2}^{-f}(q_1))dq_1+
\int_{\bbQ_1^c} \bbf(q_0)\bbg(\Pi_{q_0}^{-f}(q_1))\bbg(\Pi_{q_1}^{-b}(q_2))dq_1\\=&\int_{\bbQ}\min\{ \distribution(q_2)\bbg(\Pi_{q_1}^{-b}(q_0))\bbg(\Pi_{q_2}^{-f}(q_1)), \bbf(q_0)\bbg(\Pi_{q_0}^{-f}(q_1))\bbg(\Pi_{q_1}^{-b}(q_2))\}dq_1\\=&\bbf(q_2) \pr(q_2, q_0).
\end{align*}
}
%From the {\it invariance properties} A.1-2, we have, 
%\begin{align*}
%\bbf(q_0) \bbg(\Pi_{q_0}^{-f}(q_1)) &= \bbf (q_1)\bbg(\Pi_{q_1}^{-b}(q_0)),\\
%\bbf (q_1) \bbg(\Pi_{q_1}^{-b}(q_2)) &= \bbf (q_2) \bbg(\Pi_{q_2}^{-f}(q_1)).
%\end{align*}
%Hence, 
%\begin{align*}
%\bbf(q_0)\pr(q_0, q_2)&= \bbf(q_0) \int_{\bbQ} \bbg(\Pi_{q_0}^{-f}(q_1))\bbg(\Pi_{q_1}^{-b}(q_2))dq_1\\
%&=\bbf(q_0) \int_{\bbQ} \bbg(\Pi_{q_0}^{-f}(q_1))\frac{\bbf (q_2) \bbg(\Pi_{q_2}^{-f}(q_1))}{\bbf (q_1)} dq_1\\ &= \int_{ \bbQ} \frac{\bbf(q_0) \bbg(\Pi_{q_0}^{-f}(q_1))}{\bbf (q_1)} \bbf (q_2) \bbg(\Pi_{q_2}^{-f}(q_1))dq_1
%\\
%&=\bbf (q_2) \int_{ \bbQ} \bbg(\Pi_{q_2}^{-f}(q_1))\bbg(\Pi_{q_1}^{-b}(q_0))dq_1= \bbf(q_2) \pr(q_2, q_0).
%\end{align*}
\end{proof}

The transition acceptance probability proposed in~\eqref{defn:mh_adhmc} and analysed in Lemma~\ref{lem:adhmc_mh_reversibility} inherently assumes that the implementation of the motion maps $\Pi^{-f}$ and $\Pi^{-b}$ are themselves reversible, in that starting from a pair $(q,p)$ and applying forward motion leads it to the pair $(q',p')$, where applying backward motion to $(q',p')$ will return us to $(q,p)$. This is satisfied if the Hamiltonian motion implementation is exact and is also satisfied by well-designed numerical intergration approximators such as the leapfrog symplectic integrator \cite{verlet67}.  
Moreover, for exact Hamiltonian motion we also have that the Hamiltonian energy is preserved in both the forward and backward motion, which is easily verified to lead to the acceptance probability in~\eqref{defn:mh_adhmc} to be always $1$.  

\subsection{Geometric Convergence of Leapfrog AD-HMC}

To demonstrate the geometric convergence of AD-HMC with the Leapfrog or St\"ormer–Verlet first-order numerical integration method~\cite{verlet67}, we will first provide some estimations on the motion estimator. 
The Leapfrog integrator
interleaves discrete momentum ($p$) and position ($q$) variable updates in a sequence of steps where only one variable changes at a time.
%, and hence the Jacobian of the transformation ensures that volume is preserved in the $(q,p)$ space. 
The method takes $L$ steps each of size $\epsilon$ to move a path of length $T=L\epsilon$. %Starting at a point $(Q(0),P(0)) = (q_0,p_0)$, the leapfrog procedure repeats the following for $l=0,\ldots, L-1$:
For the ease of exposition, we present the properties of the estimator for one step, i.e. $L=1$, but they can be applied to the case of general $L$ steps. Each step of the Leapfrog integrator takes a pair of position and momentum $(Q(0),P(0))$ and updates it to $(Q(\ep),P(\ep))$ though the following calculations: 
\begin{align}
	\label{leapfrog_1}
P\left({\frac {\epsilon} 2 }\right) &= P(0) - \frac {\epsilon}{2}\,\, \frac{\partial H}{\partial q}(Q(0),P(0))\,,\\ \label{leapfrog_2}
Q(\epsilon) &= Q(0) + \epsilon \,\,\frac{\partial H}{\partial p}\left(Q(0),P\left(\frac {\epsilon} 2 \right)\right)\,,\,\,\text{and}\\ \label{leapfrog_3}
P (\epsilon) &= P\left(\frac {\epsilon} 2\right) %\\
%& \qquad
- \frac {\epsilon} 2 \,\,\frac{\partial H}{\partial q}(Q(\epsilon) ,P(\frac {\epsilon} 2 ))\,.
\end{align}  
\begin{lem}
Under the strong convexity assumptions on $U$ and $V$, there exists $\ep>0$ and $\beta_L\in(0,1)$ such that for any $q_1, q_2\in \bbQ$, we have that 
%\begin{align}
$W_1({\tilde Q}_1, {\tilde Q}_2)< \beta_L \distance( q_1, q_2)$, 
%\end{align}
with ${\tilde Q}_1, {\tilde Q}_2$ denoting the position vectors obtained through Leapfrog integration with step size $\ep$. 
\label{lem:leapfrog_Q_contraction}
\end{lem}
\begin{proof}
First, by the definition of the Wasserstein distance, we know that
%\begin{align*}
$W_1({\tilde Q}_1, {\tilde Q}_2)\le \distance({\tilde q}_1, {\tilde q}_2)$, 
%= \distance( q_1, q_2),
%\end{align*}
with ${\tilde q}_1, {\tilde q}_2$ denoting the position vectors obtained through Leapfrog integration starting from $q_1$ and $q_2$ with the same momentum vector $p$. 
Meanwhile, for each $i=1,2$, from \eqref{leapfrog_1}, \eqref{leapfrog_2} and \eqref{leapfrog_3}, and the assumption on the boundedness of derivatives of $U(\cdot)$ and $V(\cdot)$, we have, uniformly over all $q_i\in \bbQ$, 
\begin{align*}
{\tilde q_i} =& q_i + \epsilon \nabla V(p) -\frac{\epsilon^2}{2} \nabla^2 V(p) \nabla U(q_i) + O(\epsilon^3)
\end{align*}
Therefore, 
\begin{align*}
\distance ({\tilde q_1},  {\tilde q_2})\le \distance ( q_1, q_2) \left[1-\frac{\epsilon^2}{2} 
\nabla^2 V(p) M +O(\epsilon^3)\right],
\end{align*}
since $\distance (\nabla U(q_1), \nabla U(q_2)) \ge M \distance (q_1, q_2)$ following from the assumption that both $\nabla^2 U$ and $\nabla^2 V$ are bounded from above and away from zero. The result follows. 
\end{proof}
Note that the proof of Lemma~\ref{lem:leapfrog_Q_contraction} analyses Leapfrog steps using a common momentum vector $p$, which allow us to get better estimation than those in~\cite{Durmus2017OnTC}. We next analyse the distance between the corresponding momentum vectors.
\begin{lem}
Under the strong convexity assumptions on $U$ and $V$, there exists a $\beta_L\in(0,1)$ such that for any $q_1, q_2\in \bbQ$, we have that
%\begin{align}
$W_1({\tilde P}_1, {\tilde P}_2)< \beta_L \distance( q_1, q_2)$.
%\end{align}
\label{lem:leapfrog_P_contraction}
\end{lem}
\begin{proof}
Again, leapfrog algorithm gives us,
\begin{align*}
P (\epsilon)&= P (\frac{\epsilon}{2})-\frac{\ep}{2}\nabla U(Q (\epsilon))\\
&=P(0)-\frac{\ep}{2} \nabla U(Q(0)) -\frac{\ep}{2}\nabla U(Q(0)+\ep \nabla V(P(0))+O(\ep))
\\=& P(0)-\ep\nabla U(Q(0)) +O(\epsilon^2) %\frac{\ep^2}{2}\nabla^2 U(q_0)\nabla V(p_0)+O(\epsilon^3)
\end{align*}
The contractions following from the assumption that both $\nabla^2 U$ and $\nabla^2 V$ are bounded from above and away from zero. 
\end{proof}

These lemmas lead to our final result that the Leapfrog implemented AH-HMC algorithm, augmented with the rejection step~\eqref{defn:mh_adhmc}, exhibits geometric convergence.

%\tcb{ \emph{SG: We use this condition below in the first statement of the proof, do we need this or is it a holdover from the previous write up?} 
%... integrator produces the approximation $({\tilde Q}(t),{\tilde P}(t))$ satisfying % \eqref{eqn:delta_app}, 
%uniformly over all initial conditions that:
%\begin{align}
%&\distance(Q(t), {\tilde Q}(t))+\distance(P(t), {\tilde P}(t))  < \delta \label{eqn:delta_app}.
%\end{align}
%\emph{SG: The boundedness of the Hessians seems to provide this $\delta=\delta(\epsilon)$ in a similar manner as the proofs for Lemmata 5\&6 using Taylor expansion.}
%}

\begin{thm}
\label{thm:three}
If $ \Big| \frac{\partial Q}{\partial q} \Big|\le \beta <1$ outside $B_R=\{q:\distance(q, 0) \le R\}$ for some $R>0$, and bounded within $B_R$, the Hamiltonian integration is implemented with a symplectic leapfrog integrator, then the recursion generated by the AD-HMC procedure augmented with the MH-step~\eqref{defn:mh_adhmc} converges geometrically in the $W_1$ Wasserstein metric.
\end{thm}
\begin{proof}
The proof follows the same idea as that of Theorem \ref{thm:two}, and we show that for any pair $(q_1, q_2)$, there exists a $\beta'\in(0,1)$ such that $W_1(\mu(q_1), \mu(q_2)) \le \beta' \distance(q_1, q_2)$. It is straightforward to see that Lemma \ref{lem:smallset} still holds; 
%when \tcb {the condition \eqref{eqn:delta_app} holds \emph{SG: see above}};
the contraction of the Wasserstein distance for $(q_1, q_2)  \in C\times C$ also follows from Lemma \ref{lem:leapfrog_Q_contraction}.%with a smaller perturbation in the selection of the parameter  \tcb{$\epsilon_q$ \emph{SG: the correct parameter from Lem 2's proof?}}. 

Now, let us look at bivariate drift condition. For any two starting positions $(q_0, {\tilde q}_0)  \notin C\times C$ and the same two momentum vectors $p_0\in \bbP$ and $p_{12}\in \bbP$, the leapfrog integrator will produce $(q_1,p_{01})$ and $(q_2,p_2)$ from $(q_0, p_0)$, and  $({\tilde q}_1,{\tilde p}_{01})$ and $({\tilde q}_2,{\tilde p}_2)$ from $({\tilde q}_0, p_0)$. 
The proposed moves to $q_2$ and $\tilde{q}_2$ respectively will be accepted with probabilities $\xi_1$ and $\xi_2$ respectively, with
\begin{align*}
\xi_1 := \min \left\{1, \frac{\distribution(q_2)\bbg({\tilde \Pi}_{q_1}^{-b}(q_0))\bbg({\tilde \Pi}_{q_2}^{-f}(q_1))}{\distribution(q_0)\bbg({\tilde \Pi}_{q_0}^{-f}(q_1))\bbg({\tilde \Pi}_{q_1}^{-b}(q_2))}\right\}\,,
\end{align*}
\begin{align*}
\xi_2 := \min \left\{1, \frac{\distribution({\tilde q}_2)\bbg({\tilde \Pi}_{{\tilde q}_1}^{-b}({\tilde q}_0))\bbg({\tilde \Pi}_{q_2}^{-f}({\tilde q}_1))}{\distribution({\tilde q}_0)\bbg({\tilde \Pi}_{{\tilde q}_0}^{-f}({\tilde q}_1))\bbg({\tilde \Pi}_{{\tilde q}_1}^{-b}({\tilde q}_2))}\right\}\,.
\end{align*}
Now, consider
\begin{itemize}
\item 
In this case, both moves are rejected, the Wasserstein distance (conditioned on this event) will remain to be $\distance (q_0, {\tilde q}_0)$. 
\item
$\xi\le \xi_1$ and $\xi \le \xi_2$. Then Lemma \ref{lem:leapfrog_Q_contraction} applies. 
\item
$\xi\le \xi_1$ and $\xi >\xi_2$.
%$\xi\le \xi_1$ and $\xi >\xi_2$.
The probability of this happening is equal to $\xi_1-\xi_2$. Naturally, in this case, $\xi_2<1$. We need to establish that there exists a uniform constant $\varrho \in (0,1)$, such that $\xi_1-\xi_2 \le \varrho \ep \distance (q_0, {\tilde q}_0)$, with $\ep$ being the step parameter in the leapfrog algorithm. Evidently,
\begin{align*}
\xi_1-\xi_2\le \Big|\frac{\distribution(q_2)\bbg({\tilde \Pi}_{q_1}^{-b}(q_0))\bbg({\tilde \Pi}_{q_2}^{-f}(q_1))}{\distribution(q_0)\bbg({\tilde \Pi}_{q_0}^{-f}(q_1))\bbg({\tilde \Pi}_{q_1}^{-b}(q_2))}-\frac{\distribution({\tilde q}_2)\bbg({\tilde \Pi}_{{\tilde q}_1}^{-b}({\tilde q}_0))\bbg({\tilde \Pi}_{q_2}^{-f}({\tilde q}_1))}{\distribution({\tilde q}_0)\bbg({\tilde \Pi}_{{\tilde q}_0}^{-f}({\tilde q}_1))\bbg({\tilde \Pi}_{{\tilde q}_1}^{-b}({\tilde q}_2))}\Big|
\end{align*}
The right hand side can be bounded by the summation of the following two terms,
\begin{align*}
I_1=\Big|\frac{\distribution(q_2)}{\distribution(q_0)}-\frac{\distribution({\tilde q}_2)}{\distribution({\tilde q}_0)}\Big|\frac{\bbg({\tilde \Pi}_{q_1}^{-b}(q_0))\bbg({\tilde \Pi}_{q_2}^{-f}(q_1))}{\bbg({\tilde \Pi}_{q_0}^{-f}(q_1))\bbg({\tilde \Pi}_{q_1}^{-b}(q_2))}
\end{align*}
and
\begin{align*}
I_2=\Big|\frac{\bbg({\tilde \Pi}_{q_1}^{-b}(q_0))\bbg({\tilde \Pi}_{q_2}^{-f}(q_1))}{\bbg({\tilde \Pi}_{q_0}^{-f}(q_1))\bbg({\tilde \Pi}_{q_1}^{-b}(q_2))}-\frac{\bbg({\tilde \Pi}_{{\tilde q}_1}^{-b}({\tilde q}_0))\bbg({\tilde \Pi}_{q_2}^{-f}({\tilde q}_1))}{\bbg({\tilde \Pi}_{{\tilde q}_0}^{-f}({\tilde q}_1))\bbg({\tilde \Pi}_{{\tilde q}_1}^{-b}({\tilde q}_2))}\Big|\frac{\distribution({\tilde q}_2)}{\distribution({\tilde q}_0)}.
\end{align*}
Consider,
\begin{align*}
\Big|\frac{\distribution(q_2)}{\distribution(q_0)}-\frac{\distribution({\tilde q}_2)}{\distribution({\tilde q}_0)}\Big|=&\frac{\distribution(q_2)}{\distribution(q_0)}\Big|1-\frac{\distribution(q_0)}{\distribution(q_2)}\frac{\distribution({\tilde q}_2)}{\distribution({\tilde q}_0)}\Big|\\=&\frac{\distribution(q_2)}{\distribution(q_0)}|1-\exp[U(q_0)+U({\tilde q}_0)-U(q_2)-U({\tilde q}_2)]|
\end{align*}
Applying the argument in the proof of Lemma \ref{lem:leapfrog_Q_contraction} twice, we can see that, $\left|\frac{\distribution(q_2)}{\distribution(q_0)}\right|$ can be uniformly bounded due to our strong convexity assumption. Furthermore, we have
\begin{align*}
 |U(q_0)+U({\tilde q}_0)-U(q_2)-U({\tilde q}_2)|= &[\frac{\ep^2}{2}+O(\ep^3)] \distance (q_0, {\tilde q}_0)
\end{align*} 
Meanwhile, there is a uniform bound of 
\begin{align*}
\frac{\bbg({\tilde \Pi}_{q_1}^{-b}(q_0))\bbg({\tilde \Pi}_{q_2}^{-f}(q_1))}{\bbg({\tilde \Pi}_{q_0}^{-f}(q_1))\bbg({\tilde \Pi}_{q_1}^{-b}(q_2))},
\end{align*}
again due to the strong convexity assumption. Thus, we can conclude $I_1=\varrho [\ep^2+O(\ep^3)] \distance (q_0, {\tilde q}_0)$. 
Similar logic can be applied to $I_2$. Therefore, the required estimate. 
\item
$\xi>\xi_1$ and $\xi \le \xi_2$.
The treatment is the same as the previous case. 
\end{itemize}
In summary, if the acceptance/rejection decisions are the same for the exact and numerical solution, the closeness of the numerical integrator, from Lemma \ref{lem:leapfrog_Q_contraction}, ensures that the drift condition will be satisfied; meanwhile, Lemmas \ref{lem:leapfrog_Q_contraction} and \ref{lem:leapfrog_P_contraction} can also help to bound that the probability of the scenario that the acceptance/rejection decisions are different such that the drift condition can also be satisfied, as demonstrated in the last two cases above. 

Hence, there is a proper $\beta'<1$ such that, $W_1(\mu(q_0), \mu({\tilde q}_0))\le \beta'\distance (q_0, {\tilde q}_0)$, which implies the desired geometric convergence in Wasserstein distance.  
\end{proof}

%The proof of the theorem (in Sec. \ref{sec:proof_thm_three}) adapts the same techniques as those in the above section. 

%\section{The Alternating Direction HMC} % and its Convergence}
%\label{sec:adhmc}
\section{Adaptive Algorithm} \label{sec:algo}
The single iterate update step of the Alternating Direction HMC (AD-HMC) method is presented in Algorithm~\ref{algo:adhmc} and is laid side-by-side to the standard HMC method (Algorithm~\ref{algo:hmc}) to elicit their differences. Two key differences are notable. First, an iteration of the AD-HMC method consists of two implementations of the combination of \emph{sampling-and-lifting, applying Hamiltonian motion and projecting} that constitute an iteration of the standard HMC. In the first implementation, the motion is in the forward (i.e. for length $+T$) direction, and in the second implementation, the motion is in the backward (length $-T$) direction. 
%
%
%with the crucial distinction that the motion is applied once each in the forward (i.e. for length $+T$) and backward (length $-T$) directions. 
The second difference is in the calculation of the MH acceptance/rejection probability, which is adjusted accordingly. As observed in Theorems~\ref{thm:strong conv} and~\ref{thm:three}, the 
operator $\cT_A$ corresponding to the AD-HMC algorithm enjoys invariance properties and is self-adjoint even for asymmetric momentum distributions $\bbg(p)$, thus ensuring convergence.     

The main practical advantage provided by the AD-HMC method is in being able to utilize arbitrary distributions as the auxiliary. In particular, this allows for the freedom of {\em adaptively designing} the auxiliary in order to accelerate convergence to the target distribution.
%\cite{betancourt17} note that if $\distribution$ is Gaussian, then the choice of Gaussian as the auxiliary with a covariance matrix $\Sigma = \nabla^{-2} U$ 
%ensures that the Hamiltonian motion in the standard HMC algorithm is merely a rotation, \textcolor{blue}{leading to the optimal regularization of contours of the motion into circles, and consequently, this prevents the uncontrolled build up of error in practical discretized approximations of Hamiltonian dynamics implemented on a computer.} For non-Gaussian $\distribution(q)$,  this intuition led~\cite{GirolamiCalderhead11} to propose the Riemannian-Gaussian heuristic, 
%which dynamically modifies the covariance matrix of the auxiliary Gaussian to match the local Hessian of $U(q)$. So, the kinetic energy $V(p|q)$ is periodically updated by estimating the ``local'' Hessian from the past few iterations directly, or by approximating it from the covariance of recent samples, to locally regularize the Hamiltonian motion curvature. 
%
Towards this, %For the AD-HMC algorithm, 
we consider a heuristic that adaptively constructs a Gaussian mixture as the auxiliary. Our approach %, denoted \texttt{`adhmc'}, 
attempts to match the kinetic energy $V$ of the constructed auxiliary $\bbg$ to the potential $U$ of the desired target $\distribution$. Recall the solution to the AD-HMC evolution equation in Proposition~\ref{prop:d QP d qp} and note that -- speaking loosely -- if $V$ resembles $U$, then the matrices $A$ and $B$ also resemble each other and the evolution matrix resembles a (generalized) rotation. Thus, this adaptive AD-HMC % \texttt{`adhmc'} 
heuristic generalizes the Riemannian-Gaussian procedure~\citep{GirolamiCalderhead11} and stands to enjoy the same benefits of regularized and easier-to-approximate Hamiltonian motion that leads to faster convergence in practice.

\begin{minipage}[t]{0.46\linewidth}
\vskip -0.2in
\begin{algorithm}[H]
\begin{algorithmic}
%\SetAlgoLined
 \STATE \textbf{Initialization:} potential energy $U(q)$, kinetic energy $V(q)$, initial iterate %set $h_0 = (
 $q_{0}$, %1},\ldots,q_{0K})$
 trajectory length $T$\;%, total number of iterations $N$\;
% \FOR{$n=1,\ldots, N$}
% \FOR{$k=1,\ldots, K$}
%  \STATE Set $ q_0 \leftarrow q_{(n-1)k}$\;
  \STATE Sample: $p_{0}\sim \bbg(p)$  \;  
  \STATE Lift: $(q_{0}, p_0) \leftarrow q_{0}$ \;
  \STATE %Apply 
  Hamiltonian motion for $+T$:\;
  \STATE $\;\; (Q(T), P(T)) \leftarrow (q_{0}, p_0)$\;
  %\STATE
  \STATE Project: $q_{1} \leftarrow Q(T)$ \;
%  \STATE
\STATE MH Rejection:\; 
\STATE \;\;Sample $Z\sim U(0,1)$\; 
\IF{ $Z \le 
\frac{\distribution(q_1)\bbg(P(T))}{\distribution(q_0)\bbg(p_0)}\,
$}
  \STATE Return $q_1$
  \ELSE
  \STATE Return $q_0$ 
  \ENDIF
% \ENDFOR
% \STATE Set $h_{n} = (q_{n1},\ldots,q_{nK})$
% \ENDFOR
%\STATE \textbf{Return} $h_N = (q_{N1},\ldots, q_{NK})$
\end{algorithmic}
\caption{Standard HMC}
\label{algo:hmc}
\end{algorithm}
\end{minipage}
\hfill
\begin{minipage}[t]{0.46\linewidth}
\vskip -0.2in
\begin{algorithm}[H]
\begin{algorithmic}
%\SetAlgoLined
\STATE \textbf{Initialization:} potential energy $U(q)$, kinetic energy $V(q)$, initial iterate %set $h_0 = (
$q_{0}$ %1},\ldots,q_{0K})$
, trajectory length $T$%, total number of iterations $N$
%\FOR{$n=1,\ldots, N$}
%\FOR{$k=1,\ldots, K$}
%\STATE  Set $ q_0 \leftarrow q_{(n-1)k}$\;
\STATE ({\it Forward motion})
\STATE  \;\;Sample: $p_{0}\sim \bbg(p) \qquad$ 
\STATE  \;\;Lift: $(q_{0}, p_0) \leftarrow q_{0}$ \;
\STATE  \;\;Hamiltonian motion for $+T$:
\STATE  \;\;\;\;$ (Q(T), P(T)) \leftarrow (q_{0}, p_0)$\;
%\STATE  
\STATE  \;\;Project: $q_{0'} \leftarrow Q(T)$ \;
\STATE ({\it Backward motion})
\STATE  \;\;Sample: $p_{0'}\sim \bbg(p)\quad$ 
\STATE  \;\;Lift: $(q_{0'},p_{0'}) \leftarrow q_{0'}$ \;
\STATE  \;\;Hamiltonian motion for $-T$:
\STATE  {\small $(Q(-T), P(-T))\leftarrow (q_{0'}, p_{0'})$}\;
\STATE  \;\;Project: $q_{1} \leftarrow Q(-T)$ \;
\STATE MH Rejection:
\STATE \;\; Sample $Z\sim U(0,1)$\; 
\IF{ $Z \le 
\frac{\distribution(q_1)\bbg(P(T))\bbg(P(-T))}{\distribution(q_0)\bbg(p_0)\bbg(p_{0'})}\,
$}
  \STATE Return $q_1$
  \ELSE
  \STATE Return $q_0$ 
  \ENDIF
%\ENDFOR
%\STATE Set $h_{n} = (q_{n1},\ldots,q_{nK})$
%\ENDFOR
%\STATE \textbf{Return} $h_N = (q_{N1},\ldots, q_{NK})$
\end{algorithmic}
\caption{AD-HMC}\label{algo:adhmc}
\end{algorithm}
\vskip -.5in
\end{minipage}

The change in the auxiliary needs to be introduced with care in order to preserve the Markovian nature of the iterates. For instance, \cite{gelfand94sahu} provide an example of a self-tuning MCMC that periodically adapts its kernel; while each kernel has the same limiting distribution, the adaptation step introduces significant path dependence on past history and alters the limit distribution. The Adaptive Alternating Direction HMC method presented in Algorithm~\ref{algo:adapt_adhmc} eliminates this concern by updating the auxiliary at special points in the run-length called \emph{regeneration} times. 
This procedure is adopted from the adaptive MCMC scheme studied in~\cite{brockwell05kadane}. The regeneration times are sampled via rejection with respect to the ratio of an additional auxiliary distribution $\phi$ and the target distribution. The auxiliary $\bbg$ may be updated only when the Markov Chain enters regeneration. The chain also exits the regeneration state via acceptance sampling using the (reverse of) the same ratio, and the next iteration upon exit is a sample from $\phi$ independent of the past. The control parameter $c_{\phi}$ controls the frequency with which regeneration is entered.

The adaptation of $\bbg$ is permitted at points of regeneration using all the information in the previous regeneration tours. In addition, the regeneration parameters $\phi$ and $c_{\phi}$ may also be changed. Note that the ideal choice for $\phi$ is $\distribution$ itself, along with $c_\phi$ set to $\distribution$'s normalizing constant since this makes Algorithm~\ref{algo:adapt_adhmc} enter regeneration at every iteration and produces an independent sample from $\distribution$ as the output. However, $\distribution$ cannot be sampled directly, but note that the target of adapting $\phi$ are the same as $\bbg$. Thus, we set both $\bbg$ and $\phi$ to the same updated Gaussian mixture model approximated over all the previous data utilizing Algorithm~\ref{algo:cluster} (see details in Sec.~\ref{sec:expts}). The parameter $c_\phi$ is estimated as the average value of the ratio $\distribution(\cdot) / \phi(\cdot)$ over all the previous iterates in order to yield that $(c_\phi \; \phi(\cdot)) / \distribution(\cdot) \approx 1$. 

\begin{algorithm}[H]
\begin{algorithmic}
%\SetAlgoLined
 \STATE \textbf{Initialization:} potential energy $U(q)$, kinetic energy $V(q)$, initial iterate 
 $q_{0}$, trajectory length $T$,\; initial auxiliary $\bbg(p)$,\; regeneration distribution $\phi(q)$,\; regeneration multiplier $c_{\phi}$
% \FOR{$n=1,\ldots, N$}
% \FOR{$k=1,\ldots, K$}
%  \STATE Set $ q_0 \leftarrow q_{(n-1)k}$\;
  \STATE Sample $q_{0'}$ from Alternating Direction HMC starting at $q_0$ 
  \STATE Sample $Z\sim U(0,1)$ 
\IF{ $Z \ge 
\frac {c_{\phi}\phi(q_{0'})}{\distribution(q_{0'})}
$}
  \STATE Return $q_{0'}$ 
  \ELSE 
  \STATE {\it(regeneration)}
\STATE Adapt auxiliary $\bbg$,\; regeneration distribution $\phi$ and multiplier $c_\phi$
\REPEAT
\STATE Sample $q_{0''} \sim \phi$ and a $Z' \sim U(0,1)$
\UNTIL{ $Z' \le \frac {\distribution(q_{0''})} {c_{\phi}\phi(q_{0''})}$}
  \STATE Return $q_{0''}$
  \ENDIF
\end{algorithmic}
\caption{Adaptive Alternating Direction HMC}
\label{algo:adapt_adhmc}
\end{algorithm}
%

%\textcolor{blue}
{Algorithm~\ref{algo:adapt_adhmc} produces estimates of averages that are statistically consistent with the target $\distribution$. While a complete analysis of the adaptive scheme is out of scope for this article, we briefly describe how this works in the following sense. Let  
$m_u = \lim_{T\rightarrow \infty} \frac 1 T \sum_t u(q_t)$ be a long run average of a function $u(\cdot)$ using the iterates $q_t$ of Algorithm~\ref{algo:adapt_adhmc}. % where $t$ is the iteration count and $u(\cdot)$ is a function on $\bbQ$. 
Then, by the invariance of $\distribution$, we have that $m_u \;\propto\; \int_\bbQ u(q) d \distribution(q) $, where the proportionality constant is unknown. Moreover, with %if we can define 
a sequence of regenerative stopping times $\tau$, by the Optional Stopping Theorem for Martingales~\citep[Sec. 5.7]{durrett2019probability}, 
%\textbf{CITE}, 
we get that 
$ m_u = \frac 1 {\ex [\tau]} \ex \left[\sum_{t=0}^{\tau} u(q_t)\right]$.
% assuming that we start at the regeneration state. 
Thus, for any sequence of regeneration times $\tau_i,\,\,i=1,2,\ldots$, the pairs $\left( M_i = \sum_{t=s_{i-1}+1}^{s_i} u(q_t) ,  \tau_i\right)$ %produce i.i.d. ratios $M_i / \tau_i$ that 
can be used to estimate $m_u$. 
%\cite{brockwell05kadane} cite 
%\cite{gilksRoberts98Sahu} show that these sequences can be picked from regeneration tours of any Markov kernel that has $\distribution$ as its invariant distribution. 
%
\cite{gilksRoberts98Sahu} show that the pairs $(M_i, \tau_i)$ may be generated from different dynamics -- in our case different $(\bbg, \phi, c_\phi)$ -- and a consistent estimator for $m_u$ can be constructed from such pairs as long as $\distribution$ is the invariant of each such run, which is ensured by the convergence guarantees of the AD-HMC procedure. \cite{gilksRoberts98Sahu} also provide a central limit theorem for such an estimator and illustrate how to approximate the (finite) variance of the estimator. }

\section{Numerical Experiments}\label{sec:expts}

In this section, we study the efficacy of using an asymmetric momentum distribution $\bbg(p)$ for Hamiltonian Monte Carlo using AD-HMC in comparison to using Gaussian distributions in standard HMC. 
Algorithm~\ref{algo:adapt_adhmc} proposed in Sec.~\ref{sec:algo} adaptively constructs an asymmetric auxiliary to speed convergence of AD-HMC to $\distribution$. 

%We provide results for %two sets of 
%experiments % moderate dimensional targets $\distribution$. The first (Section~\ref{ssec:expt_optim}) 
%over a moderate dimensional target $\distribution$ simulated model where we study a global optimization problem set in $\Real^3$.

We study a global optimization problem of finding the minima of an objective function in $\Real^3$ with seven distinct local optimal solutions with well-separated zones of attraction. As noted earlier, recent literature in this area~\cite{chau22sghmc,gao21sghmc} seeks to leverage the geometric convergence properties of the HMC method to provide strong finite time guarantees for finding global optimal solutions. This is done by employing the objective function as the potential function $U(q)$ in an HMC scheme to produce samples that tend to concentrate near local minima of $U(q)$ (equivalently, high-probability regions of $\distribution(q)$), and then pruning this list of minima to choose the global minima.
A representation of the resulting target distribution is displayed in \textbf{Fig.}~\ref{fig:helix_hmcs} (left), where the seven local optima are placed along a helical curve and the $\distribution(q)$ is assembled as an equal-probability mixture of uncorrelated Gaussians of different standard deviations ranging from $[0.1, 0.7]$ centered on these local optima. The two highly concentrated masses in the central part have the steepest $U(q)$ valleys and also represent the global optima.

%,  while in the second (Section~\ref{ssec:expt_blr}) the HMC methods are applied to a Bayesian logistic regression application. 
%Additional details and comparisons (e.g. poor performance of standard HMC with asymmetric distributions) are available in the supplementary material. 

We include the basic HMC Algorithm (Alg.~\ref{algo:hmc}) with symmetric Gaussians as auxiliaries. 
Let $N(\mu,\Sigma )$ define a Gaussian with mean $\mu$ and covariance matrix $\Sigma$. This makes the kinetic energy $V(p)$ symmetric and quadratic: $V(p) \propto (p-\mu)^t\Sigma^{-1}(p-\mu)$. The \texttt{`hmc-StdG'} case uses $\bbg(p) \sim N(0, I) $ to represent the standard non-adaptive approach. 

The dynamically varying symmetric Gaussian auxiliary heuristic described by~\cite{GirolamiCalderhead11} is denoted the \texttt{`hmc-RG'} scheme in our experiments below, where %after every $n_a$ iterations, 
the covariance matrix $\Sigma$ is updated at regeneration points in a manner similar to Algorithm~\ref{algo:adapt_adhmc} based on the local covariance matrix of the % $2000$ 
most recently visited samples. The intent of this procedure is to use the local Hessian $\frac{\partial^2 U}{\partial q^2}$ as the covariance. This is however an expensive computation and so an additional approximation that the density is locally Gaussian
is deployed, which allows the Hessian to be estimated using the covariance of the most recent iterates.

The \texttt{`adhmc'} heuristic maintains a Gaussian mixture auxiliary constructed to match all the modes of $\distribution(q)$ where recent samples have concentrated. Formally, it starts with $\bbg_0(p) \sim N(0,I)$ and at each regeneration point adapts $\bbg$ according to Algorithm~\ref{algo:cluster}. Recall from Thm.~\ref{thm:three} that the AD-HMC algorithm run with each asymmetric auxiliary constructed by \texttt{`adhmc'} converges geometrically. 
\begin{algorithm}[H]
\begin{algorithmic}
%\SetAlgoLined
 \STATE \textbf{Initialization:} Collection of past iterates $\{q_t,\;t=1,\ldots,T\}$
 \STATE $\mathbf{1.}\,\,\,$ Apply labels $c=-1,0,\ldots,C$ to samples $q_t$ using the OPTICS clustering algorithm~\citep{opticscluster}, which dynamically determines both the number of clusters $C$ and the assignment of labels; the samples in the $c=-1$ class denote those that were not classified. 
\STATE $\mathbf{2.}\,\,\,$ For each $c$, let $s_c$ represent number of $q_t$ labeled $c$, with $T = \sum_{c=-1}^C s_c$.  
\STATE $\mathbf{3.}\,\,\,$ Estimate sample mean $\mu_c$ and covariance matrix $\Sigma_c$;\; set $\nu_c = \frac {s_c} T$.
\STATE $\mathbf{4.}\,\,\,$ Set auxiliary $\bbg(p) \sim \left\{ N(\mu_c, \Sigma_c)\right.$ w.p. $\left.\nu_c,\;\;c=-1,\ldots,C\,  \right\}$.
\STATE $\mathbf{5.}\,\,\,$ Set regeneration distribution $\phi\leftarrow \bbg\;\;\;$and$\;\;c_\phi \;\leftarrow\; \frac 1 T \sum_t \frac {\phi(q_t)} {\distribution(q_t)}.  $ 
\end{algorithmic}
\caption{Clustering Adaptation of Auxiliary $\bbg, \phi$ and $c_\phi$ }
\label{algo:cluster}
\end{algorithm}

The Hamiltonian motion dynamics in the algorithms are approximated by the leapfrog or St\"ormer–Verlet first-order symplectic numerical integration procedure~\citep{verlet67} described in Section~\ref{sec:perturbations}. 
The method takes $L$ steps each of size $\epsilon$ to move a path of length $T=L\epsilon$, where in each step the calculations given by~\eqref{leapfrog_1}-\eqref{leapfrog_3} are performed. The parameters $\epsilon$ and $L$ need to be chosen carefully so that the proportion of samples rejected by the MH rejection step with~\eqref{defn:mh_adhmc} probability is minimal, and in practice dynamically updated schemes such as the NUTS procedure~\citep{hoffman14} are popular. Our experiments fix the $\epsilon$ and $L$ values to present an uncluttered view of the comparison of the different methods.

All algorithms were implemented in python 3.7 and ran on a 
server with two AMD EPYC 7301 $16$-core processors and 64Gb system memory and two GeForce RTX2070 8Gb GPUs. The \texttt{`adhmc'} method runs the OPTICS clustering scheme with its default parameters in the Scipy library~\citep{scikit-learn}. Note that the clustering step limits the use of the adaptive schemes to only moderate dimensional settings.

%\subsection{Global Optimization Model in $\Real^3$}\label{ssec:expt_optim}

%In both experiments, t
The three methods are primarily compared using their performance in reducing the distance between the iterate distribution and the desired target distribution. The iterate distribution is represented by the empirical distribution of the most recently visited samples. %In Section~\ref{ssec:expt_optim}, 
The $W_1$ distance is numerically approximated using the Sinkhorn method~\cite{feydy2019interpolating} to give a flavor of the geometric convergence results that are at the heart of this paper. 

\textbf{Fig.}~\ref{fig:helix_hmcs} (right) provides a comparison of the performance of the three HMC methods in reducing the $W_1$ distance between the iterates and the target distribution as a function of the total wallclock hours elapsed in the top panel, while the bottom panel presents the average per-iteration CPU seconds taken by each method. The first half hour of iterations are discarded as the initialization phase.  The $W_1$ distance is computed using the empirical distributions obtained with the last $900$ visited points. Each result is calculated by aggregating data from $50$ independent replications of each method, with the individual replications plotted with a lighter shade.
The \texttt{`hmc-RG'} and \texttt{`adhmc'} adaptive methods use the last visited $2000$ points in $\bbQ$ for their modification of the auxiliary distribution, and $n_a$ is set to allow for $10$ modification steps throughout the run of each experiment. 

Recall that the AD-HMC takes two sets of Hamiltonian motion steps per iterations, and consequently the \texttt{`adhmc'} steps start out twice as expensive as the other two methods. As the iterations progress, the multi-modal asymmetric Gaussian mixture distribution computed by \texttt{`adhmc'} adds to their time-complexity. However, the averaged distance of the iterates of the \texttt{`adhmc'} method fall much faster than the \texttt{`hmc-StdG'} and \texttt{`hmc-RG'} methods. Also note that the linear drop in the $W_1$ distance in log-scale, which indicates the geometric convergence of the three methods.
\begin{figure}[tb]
	\centering
	\includegraphics[width=0.47\linewidth]
	{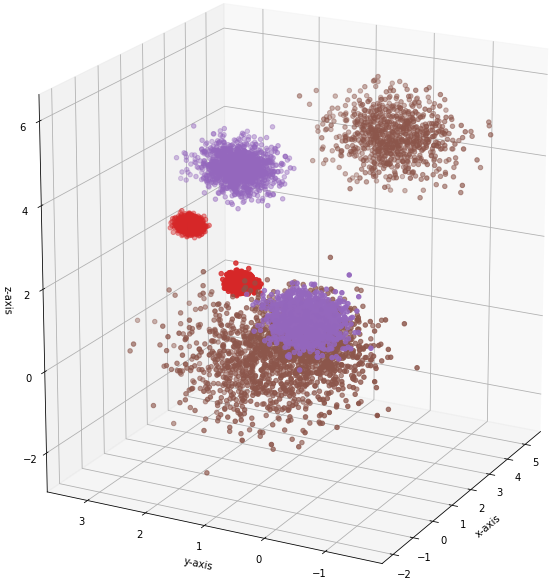}
	\hskip 0.2in
	\includegraphics[width=0.47\linewidth]
	{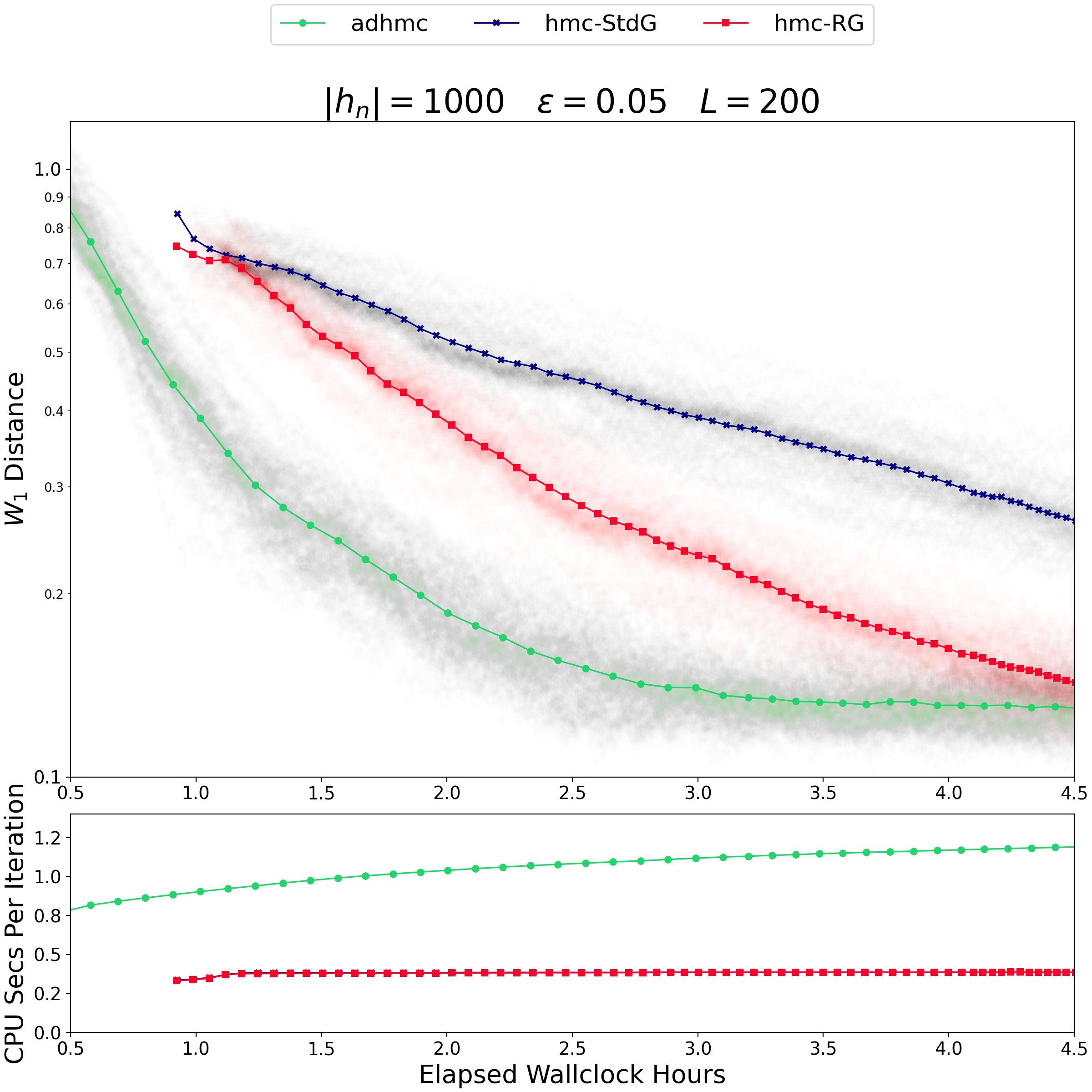}
	\caption{(left) The target distribution with six local optima, where the smaller variance around the optimal point indicates a corresponding lower optimal value. (right) Comparison of the AD-HMC, HMC with standard Gaussian and HMC with STAN-like adaptive Gaussian methods  (top) $W_1$ distance (in log-scale) between iterate and target distribution and (bottom) average per-iteration computation time in seconds; both plots use CPU wallclock time in hours as the $x$-axis, and fix the number of particles $|h_n|= 1000$ in each iteration, with Leapfrog integrator parameters $\epsilon=0.05$ and $L=200$.}
	\label{fig:helix_hmcs}
\end{figure}

\tbfg~\ref{fig:iteratestrip} presents the iterates  returned at the termination of the three algorithms studied in the results in~\tbfg~\ref{fig:helix_hmcs} (right). The iteration is represented by a collection of $1000$ particles, each initiated at the origin. 
The \texttt{`adhmc'} method is seen to have identified all the nodes of the target, and samples are also concentrated as expected from the form of $\distribution$ with both the sharp narrow nodes (colored in red) given equal weight. The approximations of the  \texttt{`hmc-StdG'}'s $h_n$ are still far from the target at termination, where one of the two global optima is barely identified. While the \texttt{`hmc-RG'} method is able to identify the two concentration points (global optima), the sample weights to each node is imbalanced compared to the true $\distribution$.

\begin{figure}[tb]
	\centering
    \includegraphics[width=0.3\linewidth]{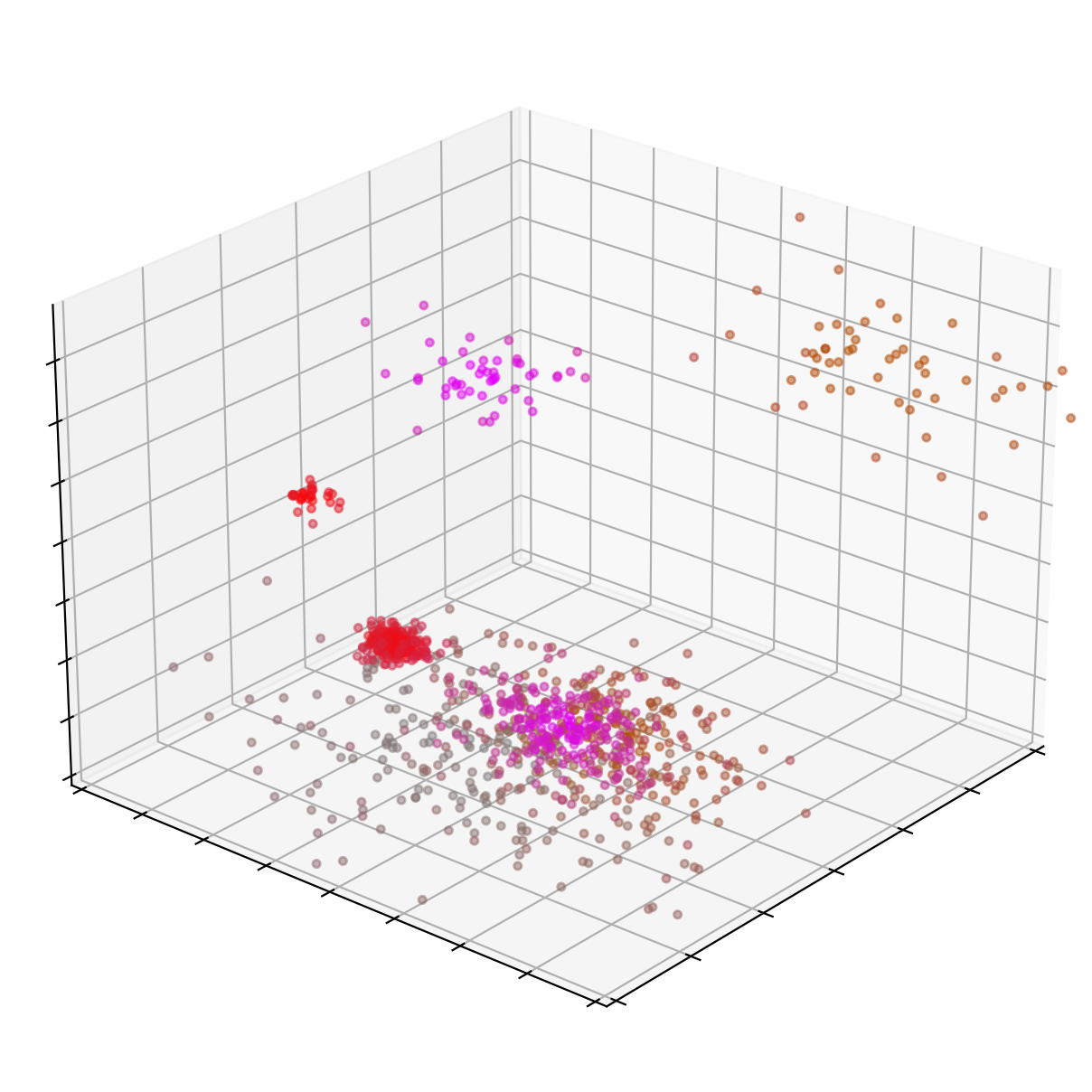}
%	\vskip 0.in
	\includegraphics[width=0.3\linewidth]{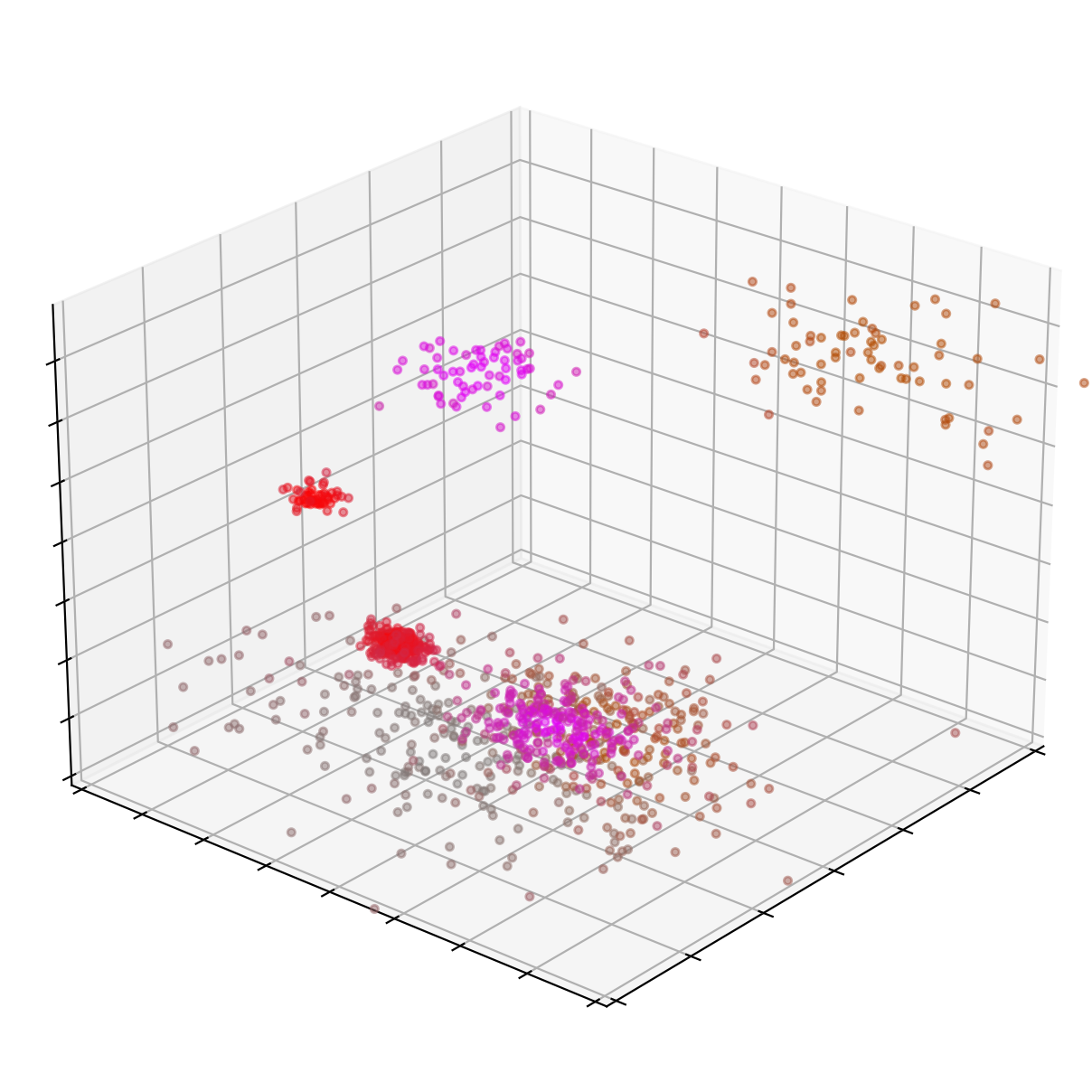}
	%	\vskip 0.in
	\includegraphics[width=0.3\linewidth]{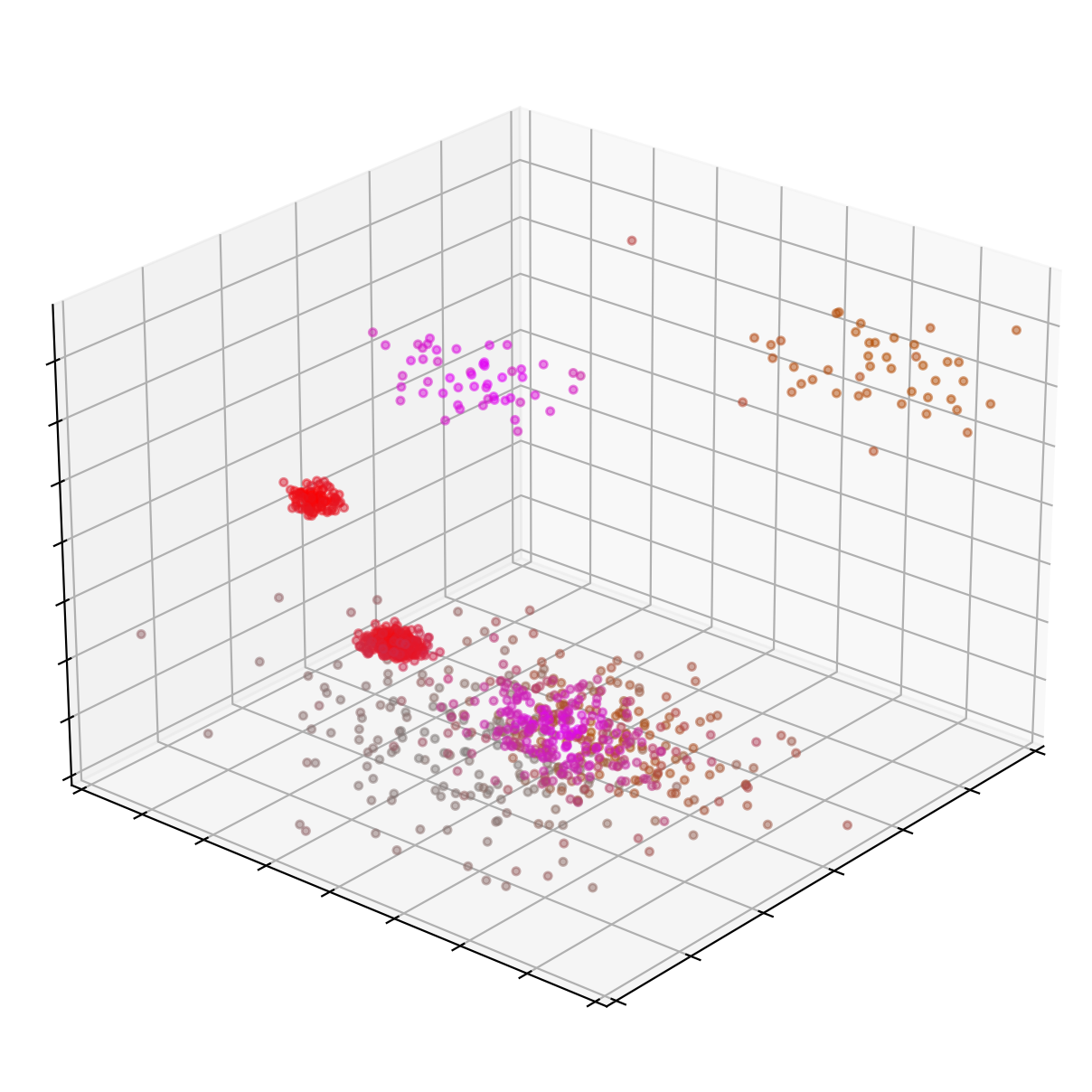}
	\caption{$3D$-scatter plots of iterations, represented as a $1000$-sized set of points, returned by the three methods \texttt{`hmc-StdG'} (top), \texttt{`hmc-RG'} (middle) and \texttt{`adhmc'} (bottom) corresponding to the algorithm settings corresponding to~\tbfg~\ref{fig:helix_hmcs} (right). Each particle is initiated at the origin. Compare to the sample set generated from the true distribution $\distribution$ presented in ~\tbfg~\ref{fig:helix_hmcs} (left).}
	\label{fig:iteratestrip}
\end{figure}

Each replication in the experiments in \tbfg~\ref{fig:helix_hmcs}(right) runs $|h_n|=1000$ chains concurrently. While in the \texttt{`hmc-StdG'} method, these chains remain independent of each other, the two adaptive schemes use the $n_C=2000$ last-visited values of these chains in modifying the auxiliary distribution resulting in cross-correlations between the chains. \tbfg~\ref{fig:adhmc_finetune} (left) presents the impact of the number $|h_n|$ of concurrent chains on the performance of the \texttt{`adhmc'} method for fixed integrator parameters $\epsilon=0.05$ and $L=200$; for clarity, we present only the average performance over the $50$ replications. Since the clustering-based auxiliary modification heuristic uses $n_C=2000$ visited samples, it is clear that the larger the number of concurrent chains provides a better estimate of the $\distribution(q)$ and hence allows for the most consistent convergence for $|h_n|=500$. 

\tbfg~\ref{fig:adhmc_finetune} (right) explores the impact of the approximations provided by the leapfrog symplectic integrator on \texttt{`adhmc'}'s convergence. Each of the three sets of experiments implements a total Hamiltonian motion length $T=5$, but with differing step size $\epsilon$ and number of steps $L$. With a large $\epsilon$ and a low count $L$, the method initially drops quickly but seems to plateau at a higher distance to the target because of the error inherent in the numerical approximation, which is also visible from the increased variance in its replications. On the other hand, the smallest $\epsilon$ method takes significantly longer to run because of the larger $L$. The size of $\epsilon=0.025$ seems to produce an adequate balance between the numerical error and the needed computation effort.

\begin{figure}[tb]
\vskip -0.1in
	\centering
		\includegraphics[width=0.47\linewidth]
		{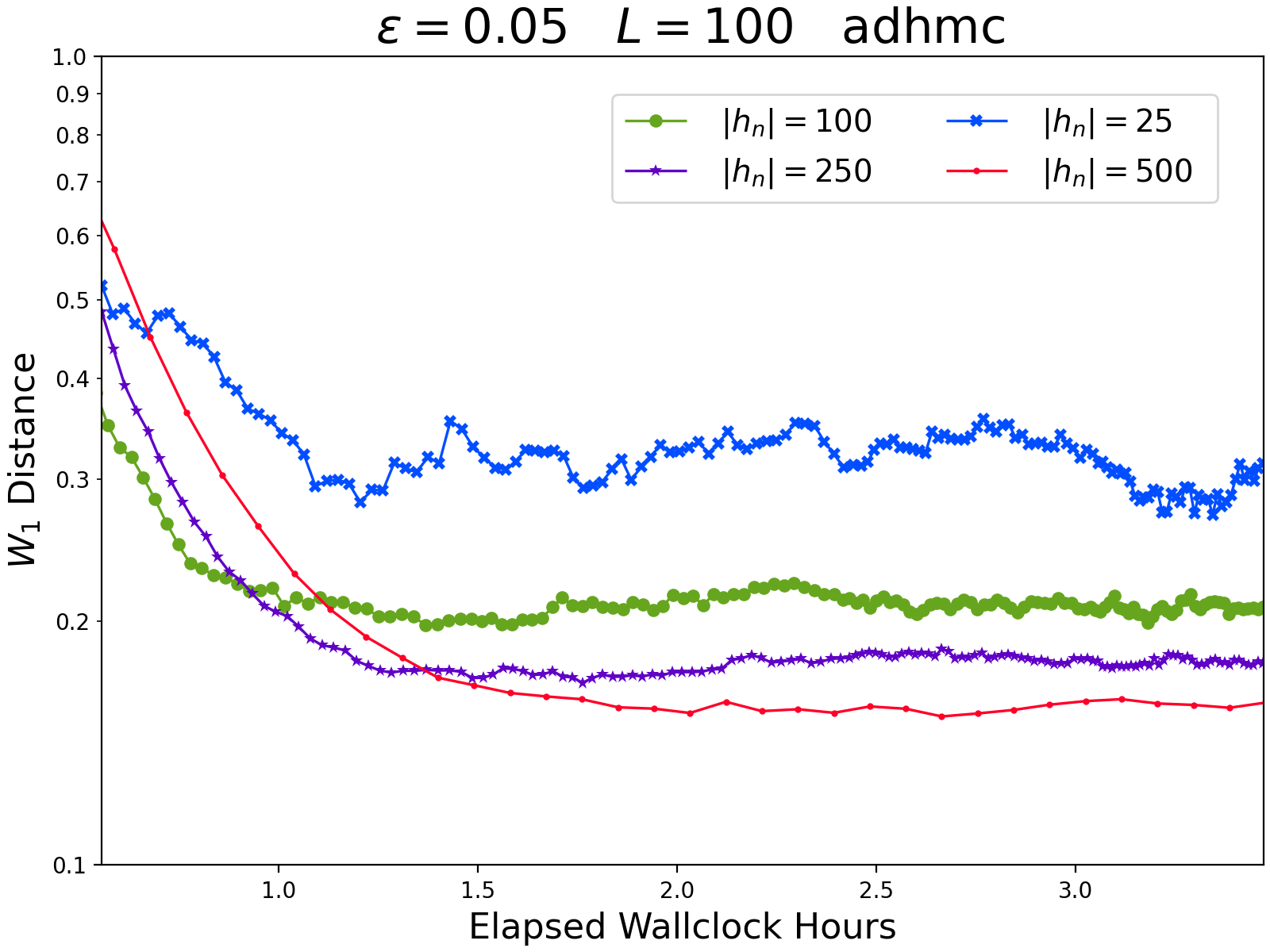}
\hskip 0.2in
	\includegraphics[width=0.47\linewidth]
	{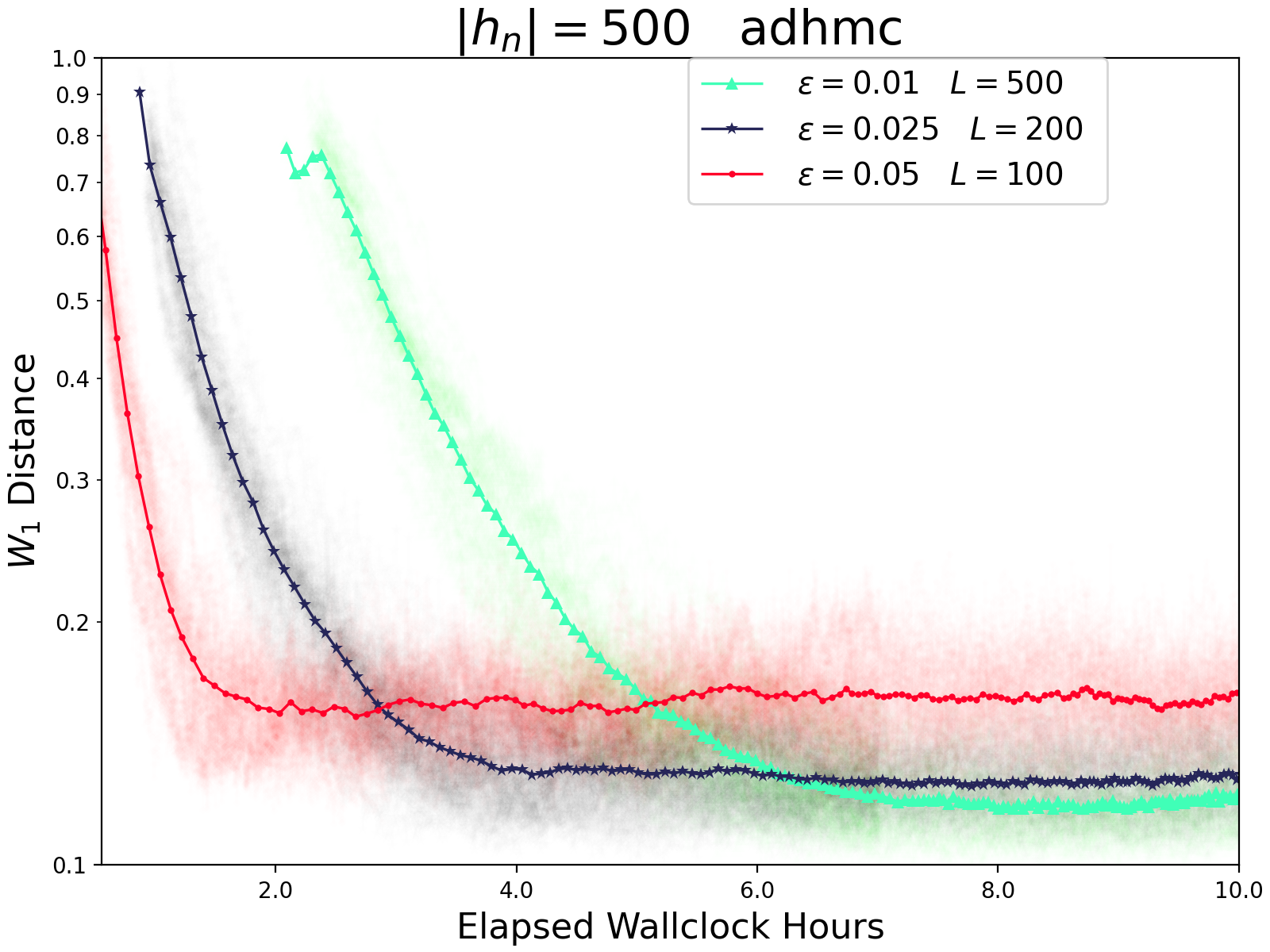}
\vskip -0.1in
	\caption{$W_1$ distance between iterate and  target distribution over total elapsed wall-clock hours for iterations of the AD-HMC method : (left) for fixed $\epsilon = 0.05, L=100$ and varying number of particles $|h|$, and (right) for particles $|h|=500$ and three settings that  implement total Hamiltonian motion of $T=\epsilon L = 5$ using the Leapfrog integrator}
	\label{fig:adhmc_finetune}
%\vskip -0.2in
\end{figure}

Finally,~\tbfg~\ref{fig:helix_adhmcVhmc} presents a comparison of the 
the AD-HMC Algorithm~\ref{algo:adhmc} against  
the forward-motion only HMC Algorithm~\ref{algo:hmc} for a fixed auxiliary. On the left, an asymmetric mixture-of-Gaussians obtained from the \texttt{`adhmc'} scheme is set as the auxiliary for both algorithms, and each method uses $|h_n| = 1000$ concurrent chains and leapfrog parameters $\epsilon=0.05, L=200$. The need for the alternating direction motion steps in AD-HMC for asymmetric auxiliary distributions is clearly evident from these results, where there is no perceivable improvement in the $W_1$ distance from the target $\distribution(q)$ for the unidirectional-motion HMC method.  
Fig.~\ref{fig:helix_adhmcVhmc} (right) on the other hand starts both algorithms with the standard Gaussian as the auxiliary. The standard HMC (solid lines) with AD-HMC (dotted lines) algorithms use $|h_n| = 1000$ and leapfrog parameters (red) $L=20,\epsilon=0.01$, (green) $L=50,\epsilon=0.01$ and (blue) $L=100,\epsilon=0.01$. Both algorithms seem to work equally well for each of the settings displayed in the figure. %The similarity in performance of the AD-HMC and the forward only HMC in this case is because t
This is to be expected since the forward and reverse motions of AD-HMC are stochastically identical to each step of the standard HMC, and thus both perform similarly over cumulative wallclock time elapsed. There is a slight improvement in convergence with higher leapfrog step counts for both algorithms.

%
%\begin{minipage}[t]{0.47\linewidth}
\begin{figure}[tb]
	\centering
	\includegraphics[width=0.42\textwidth]
	{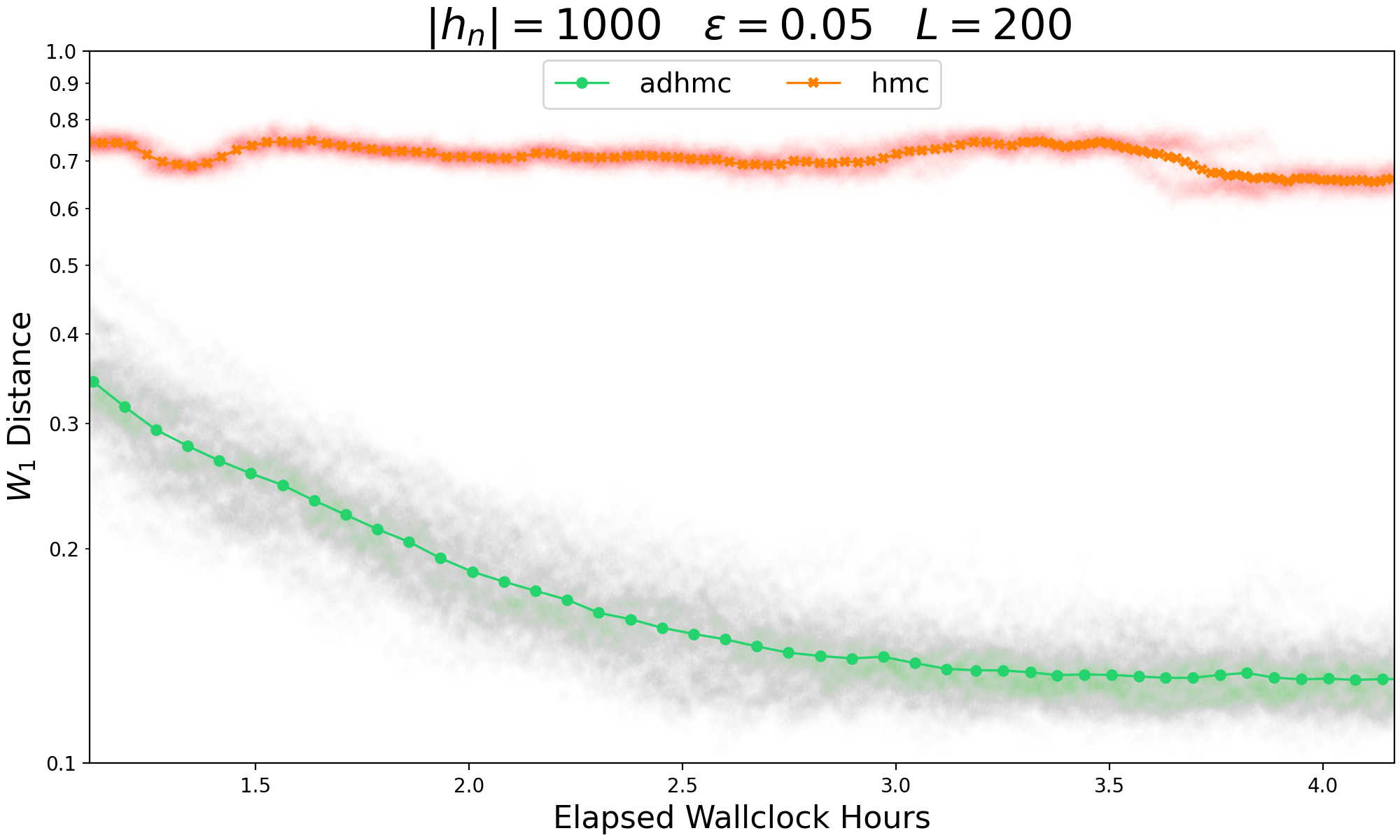}\hspace{0.25in}
\includegraphics[width=0.51\textwidth,trim={2cm 0 0 0},clip]{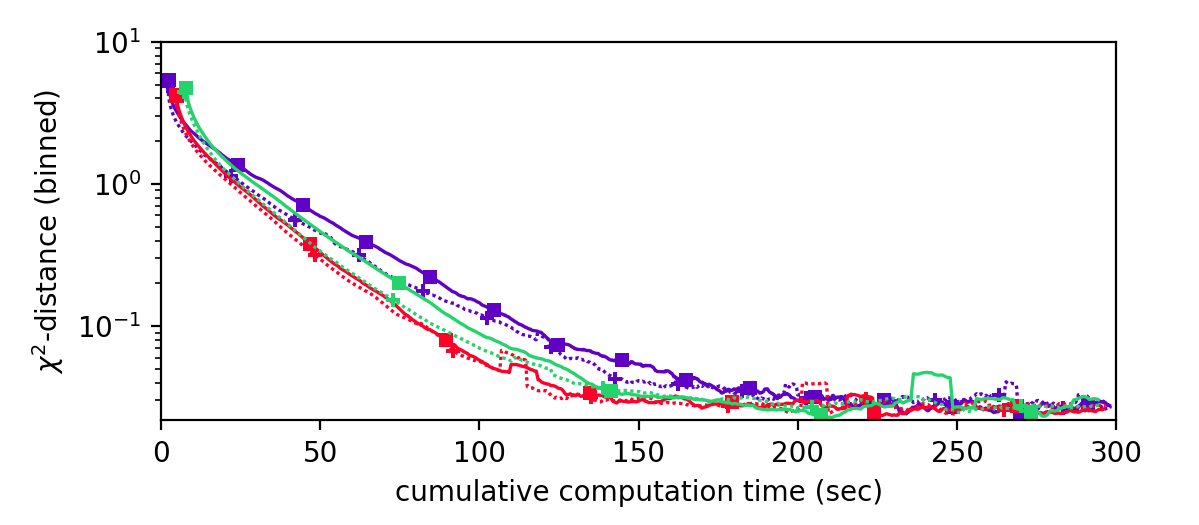}
	\captionof{figure}{Performance comparison of the forward-motion-only standard HMC and AD-HMC using $W_1$ distance between the iterate and the target distribution over iteration counts. On (Left), an asymmetric mixture-of-Gaussians auxiliary is used, with HMC in red and AD-HMC in green and the leapfrog parameters set to $L=200, \epsilon=0.05$. On (Right) a (symmetric) standard Gaussian auxiliary is used, and standard HMC is in solid lines and AD-HMC in dotted lines. Three colors represent (red) $L=20,\epsilon=0.01$, (green) $L=50,\epsilon=0.01$ and (blue) $L=100,\epsilon=0.01$.}
	\label{fig:helix_adhmcVhmc}
\end{figure}

\textbf{Summary.} The numerical evidence shows that the AD-HMC based \texttt{`adhmc'}, a generalization of the Riemannian-Gaussian procedure~\citep{GirolamiCalderhead11}, to model the auxiliary distribution on key characteristics of the target distribution, can notably speed up estimation of the target distribution over plain HMC \texttt{`hmc-StdG'} and adaptive HMC \texttt{`hmc-RG'}. The importance of alternating the direction when using asymmetric auxiliaries is also displayed. This shows a promising avenue for further exploration both in practical adaptive auxiliary design algorithms and in rigorously understanding when they may be guaranteed to converge faster.

\section{Conclusions}
\label{sec:conclusions}

We present a novel convergence analysis for
Hamiltonian Monte Carlo Algorithms with general momentum distributions. The analysis is deeply rooted in understanding of the dynamics of the density function, and it allows us to obtain results on geometric convergence of a large family of HMC algorithms, including AD-HMC, a novel algorithm proposed in this paper. In addition to demonstrating the effective of AD-HMC, our numerical studies also explore new possibilities in designing and refining HMC algorithms.

\bibliographystyle{elsarticle-harv}

\end{document}